\definecolor{darkblue}{RGB}{0,0,160}
\def\cocoa{{\hbox{\rm C\kern-.13em o\kern-.07em C\kern-.13em o\kern-.15em A}}}
\newcommand{\RR}{\mathbb{R}}
\newcommand{\PP}{\mathbb{P}}
\newcommand\macaulay{\texttt{Macaulay2}}
\newcommand{\<}{\langle}
\renewcommand{\>}{\rangle}
\DeclareMathOperator*{\adj}{adj}
\DeclareMathOperator*{\rk}{rk} % rank of matrix
\DeclareMathOperator*{\spann}{span}
\DeclareMathOperator*{\Sym}{Sym}
\newcommand{\inD}[1][\relax]{\def\argone{#1}\def\temprelax{\relax}
  \ifx\argone\temprelax\right.\else\,\middle|#1\right.{}\fi}
\newtheorem{thm}{Theorem}[section]
\newtheorem{lem}[thm]{Lemma}
\newtheorem{cor}[thm]{Corollary}
\newtheorem{prop}[thm]{Proposition}
\theoremstyle{definition}
\newtheorem{exmp}[thm]{Example}
\newtheorem{rem}[thm]{Remark}
\definecolor{codegreen}{rgb}{0,0.6,0}
\definecolor{codepurple}{rgb}{0.58,0,0.82}
\definecolor{codered}{RGB}{185,19,6}
 \lstdefinelanguage{myLang}{
   basicstyle=\footnotesize\ttfamily,
   xleftmargin=2em,
   xrightmargin=2em,
   columns=fullflexible,
   keepspaces=true, 
   classoffset=2,
   morekeywords={minors, genericMatrix, matrix, ideal, eliminate, map, degrees, random, numgens, minors, binomial, entries, transpose, submatrix, det},
   keywordstyle={\color{blue}\bfseries},
   classoffset=3,
   morekeywords={for,from, to, do, while},
   keywordstyle={\color{codepurple}\bfseries},
   classoffset=4,
   morekeywords={Height},
   keywordstyle={\color{Emerald}\bfseries},
   classoffset=5,
   morekeywords={QQ, ZZ},
   keywordstyle={\color{codegreen}\bfseries},
   sensitive=false, % keywords are not case-sensitive
   morecomment=[l]{--}, % l is for line comment
   commentstyle=\color{codered},
   stepnumber=1,
   numbers=left,
   captionpos=b,
   showspaces=false,
   showstringspaces=false,
   morestring=[b]",
   frame=single
}
\begin{document}

\title{Algebraic Relations and Triangulation of  Unlabeled Image Points}
\date{July 2017}

\author{Andr\'e Wagner}
\address{Technische Universität Berlin\\ Berlin, Germany}
\urladdr{\url{http://page.math.tu-berlin.de/~wagner}}

\label{chp:Unlabel} % For referencing the chapter elsewhere, use \ref{Chapter1} 

\begin{abstract}
 In multiview geometry when correspondences among multiple views are unknown the image
 points can be understood as being
 unlabeled. This is a common problem in computer vision. We give a novel approach to handle
 such a situation by regarding unlabeled point configurations as points on the Chow
 variety  $\Sym_m(\PP^2)$.  For two unlabeled points we design an algorithm that solves the
 triangulation problem with unknown correspondences. Further the unlabeled multiview variety $\Sym_m(V_A)$ is studied.
\end{abstract}

\maketitle

\section{Introduction}
In many computer vision applications, the correspondences among views are unknown. Hence the $m$ world points in $\PP^3$ and their images points in $\PP^2$ will be unlabeled. The multiview variety $V_A$  \cite{aholt2011hilbert} is a fundamental object in multiview
geometry, it encodes the relations among $n$ image points  of one world point in $n$ images
taken by $n$ cameras. To study the problem of unlabeled data, we propose to work with the {\em unlabeled multiview variety}. This is
the variety of products of \emph{multiview varieties} \cite{aholt2011hilbert} with unknown correspondences.  An unlabeled point configuration in $(\PP^2)^m$ is a point in the {\em Chow variety} $\Sym_m(\PP^2)$ \cite[\S
8.6]{landsbergtensors}. Algebraically the {\em unlabeled multiview variety} is
the image of the multiview variety under the quotient map  $\bigl((\PP^2)^m \bigr)^n
\rightarrow\bigl( \Sym_m(\PP^2)\bigr)^n$  for the symmetric group $S(m)$ action. Our focus is mostly on two unlabeled points. We design Algorithm \ref{algo:unlabledTriangAlgoComplete} to
triangulate two unlabeled points.

While labeled world and image configurations are points in $(\PP^2)^m $ and $(\PP^2)^m $,
unlabeled image
configurations are points in the {\em Chow varieties} which live as subvarieties in $\Sym_m(\PP^2)$ and $\Sym_m(\PP^3)$.  This is
the variety of ternary forms that are products of $m$ linear forms (cf.~\cite[\S
8.6]{landsbergtensors}), respectively quaternary forms that are products of $m$ linear forms. It is embedded in the space $ \PP^{\binom{m+2}{m}-1} $ of all ternary forms
of degree~$m$.

We start by giving a brief introduction to multiview geometry. A \emph{camera} is a linear map from the three-dimensional
projective space $\PP^3$ to the projective plane $\PP^2$, both over $\RR$.  We represent
$n$ cameras by matrices $A_1,A_2,\ldots,A_n\in \RR^{3\times4}$ of rank~$3$.  A \emph{world
point} $X\in \PP^3$ is mapped by the \emph{perspective relation} of the camera
$A_j\in\RR^{3\times4}$ 
\[
A_jX=\lambda_j u_j,\, \lambda_j \in\RR\setminus \{0\}
\] 
to the \emph{image point} $u_j\in\PP^2$. The kernel of $A_j$ is the {\em focal
point} $f_j \in \PP^3$.  Each image point $u_j=(u_{j0},u_{j1},u_{j2}) \in \PP^2$ of camera $A_j$ has
a line through $f_j$ as its fiber in $\PP^3$. This is the \emph{back-projected line}. On that back-projected line lies the world point $X\in\PP^3$. A camera is called \emph{normalized} if it is of the form
$[I\,|\,c]$ form some vector $c\in\RR^3$, with $I\in \RR^{3\times 3}$ being the identity matrix. 

We assume throughout the paper that the focal points of the $n$ cameras are in {\em general
position}, i.e.~all distinct, no three on a line, and no four on a plane.  Let $\beta_{jk}$ denote
the line in $\PP^3$ spanned by the focal points $f_j$ and $f_k$.  This is the \emph{baseline} of the
camera pair $A_j, A_k$.  The image of the focal point $f_j$ in the $k$th-image plane of the camera
$A_k$ is the \emph{epipole} $e_{kj}$. %  Note that the baseline $\beta_{jk}$ is the back-projected
% line of $e_{kj}$ with respect to $A_j$ and also the back-projected line of $e_{jk}$ with respect to
% $A_k$.  See Figure \ref{fig:2view} for a sketch.

% \begin{figure}[th]
%   \includegraphics[width=\textwidth]{}% 
%    \caption{Two-view geometry}
%   \label{fig:2view}
% \end{figure}
For three cameras $A_j,A_k,A_l$ the plane spanned by their focal points is called
\emph{trifocal plane}.

Computing the point of intersection  of the back-projected lines to reconstruct $X$  is called \emph{triangulation}\cite[\S9.1]{hartley2003multiple}.
 The triangulation can  be based on multiple views and amounts to solve the linear equations
 \begin{equation}\label{eq:multiViewTriangulation}
\quad  B \begin{bmatrix}\,X\\-\lambda_1\\\vdots\\-\lambda_n\end{bmatrix} \, = \, 0
\qquad \hbox{where} \quad
B \,=\, \begin{bmatrix} 
A_1      & u_1    & 0      & \ldots & 0\\ 
A_2      & 0      & u_2    & \ddots & 0\\
\vdots   & \vdots & \ddots & \ddots & \vdots \\ 
A_n      & 0      &\ldots  &      0 & u_n\end{bmatrix} 
\,\in \,\RR^{3n \times (4+n)}.
\end{equation}
For generic data the world point $X$ is represented by the first four entries of any
element in the
kernel of $B$.
In practice this equation system is solved by a
singular value decomposition and similar approaches \cite{hartley1997triangulation}.
Such that a reconstruction of $X$ restricted to only two views is possible the image
points must satisfy a bilinear relation
\[
u_k^TFu_j=0, \,\text{with } F\in \RR^{3\times3}.
\]
The matrix $F$ is called \emph{fundamental matrix} of views $j$ and $k$. 

The \emph{multiview variety} $V_A$ of the camera configuration $A = (A_1,\dots,A_n)$
was defined in \cite{aholt2011hilbert} as the closure of the image of the rational map 
\begin{equation}
  \label{eq:multiMap}
  \begin{matrix}
    \phi_{A}:&  \PP^3  & \dashrightarrow &  \PP^2 \times \PP^2 \times \cdots \times \PP^2, \\
&    X & \mapsto&  (A_1X, A_2X, \ldots, A_nX).
  \end{matrix}
  \end{equation}
The points $(u_1,u_2,\ldots,u_n) \in V_A$ are the 
consistent views in $n$ cameras.
The prime ideal $I_A$ of $V_A$ was determined in \cite[Corollary 2.7]{aholt2011hilbert}.
It is generated by the $\binom{n}{2}$ bilinear polynomials
plus $\binom{n}{3}$  trilinear polynomials. 
See \cite{li2013images} for the natural generalization  
of this variety to higher dimensions.

\begin{rem}\label{rem:binomMulti}
  Let $A_i$ be the matrix with the $i$-row of the $4\times 4$ identiy matrix $I_4$ omitted.
  Then the multiview variety given by the four cameras $A_i, \, i\in[4]$ is isomorphic  up to a change of
 coordinate system to any multiview
  variety with four cameras in linearly general position.
  \end{rem}

We define the \emph{unlabeled multiview variety} $\Sym_m(V_A)$ to be the closure of the image of the rational map
$\zeta_A$
\[ \zeta_{A}: {\Sym}_m(\PP^3) \dashrightarrow ( {\Sym}_m(\PP^2)\bigr)^n.
\] 
Then $\zeta_{A}$ maps an order $m$ symmetric $4\times\ldots\times 4$ tensor to $n$ order $m$
symmetric ${3\times\ldots\times 3}$ tensors. For each world point these tensors have one axis.  This gives rise to the {\em unlabeled multiview
variety} $\,\Sym_m(V_A) $ in $ {\bigl(}\PP^{\binom{m+2}{m}-1}{\bigr)}^{n} $, where $V_A$ is the
multiview variety.  Let $X,Y\in \PP^3$ be two labeled world points. We denote their images
in the $i$-th picture as $u_i,v_i$, 
\[A_iX=\lambda_i u_i, \, A_iY=\mu_i v_i, \, \lambda_i,\mu_i\in \RR\setminus \{0\} .\] 
Then $u_i= (u_{i0},u_{i1},u_{i2})$ and $v_i=(v_{i0},v_{i1},v_{i2})$ are the coordinates of the image
points.
\begin{exmp}
\label{exmp:chow} Let $m=n=2$. The Chow variety $\Sym_2(\PP^2)$ is the hypersurface in $\PP^5$
defined by the determinant of a symmetric $3 \times 3$-matrix $N=(N^{ij})_{i,j\in[3]}$. From the two
image points $u_i,v_i$ we can construct an unlabeled image point on the Chow variety
$\Sym_m(\PP^2)$, this is the symmetric matrix $N=u_iv_i^T+v_iu_i^T$. Up to
the symmetry of coordinates of $N$, this reads as
\[
\begin{matrix} N_1^{00} = 2 u_{10} v_{10}, & N_1^{11} = 2 u_{11} v_{11}, & N_1^{22} = 2 u_{12} v_{12}, \\
N_1^{01} = u_{11} v_{10}+u_{10} v_{11} , & N_1^{02} = u_{12} v_{10}+u_{10} v_{12}, & N_1^{12} = u_{12}
v_{11}+u_{11} v_{12}. \\
\end{matrix}
\] The quotient map $(\PP^2)^2 \rightarrow \Sym_2(\PP^2) \subset \PP^5$ is given by the formulas
above. Similarly, for the two unlabeled images of the the second camera we use
\[
\begin{matrix} N_2^{00} = 2 u_{20} v_{20}, & N_2^{11} = 2 u_{21} v_{21}, & N_2^{22} = 2 u_{22} v_{22}, \\
N_2^{01} = u_{21} v_{20}+ u_{20} v_{21} , & N_2^{02} = u_{22} v_{20}+ u_{20} v_{22}, & N_2^{12} = u_{22}
v_{21}+ u_{21} v_{22}. \\
\end{matrix}
\] 

We compute the image of $V_A \times V_A$ in $\PP^5 \times \PP^5$, denoted $\Sym_2(V_A)$. Its
ideal has seven minimal generators, three of degree $(1,1)$, and one each in degrees $(3,0), (2,1),
(1,2), (0,3)$.  The generators in degrees $(3,0)$ and $(0,3)$ are ${\rm det}(N_1)$ and ${\rm
det}(N_2)$.  The five others depend on the cameras $A_1, A_2$.
\end{exmp}

If $m=2$ we can use symmetric matrices  in $\Sym_2(\RR^4)$ and $\Sym_2(\RR^3)$ to describe the unlabeled multiview variety, even for $n$
larger than two. This is computationally easier to handle.
Then the unlabeled world point configuration of the two
labeled world points $X,Y\in \PP^3$ can be represented by a symmetric $4\times 4$ rank two matrix
$M=XY^T+YX^T$. The unlabeled image point configuration of two labeled image points in the $i$-th
picture $u_i,v_i\in \PP^2$ can be represented by the symmetric $3\times 3$  rank two matrix
$N_i=u_iv_i^T+v_iu_i^T$. 
\begin{rem}Since $X,\, Y,\, u_i,\, v_i$ are points in projective spaces the symmetric rank two matrices $M,\, N_i$ are \emph{only defined up to scale}.
\end{rem}

\begin{figure}
  \includegraphics[width=\textwidth]{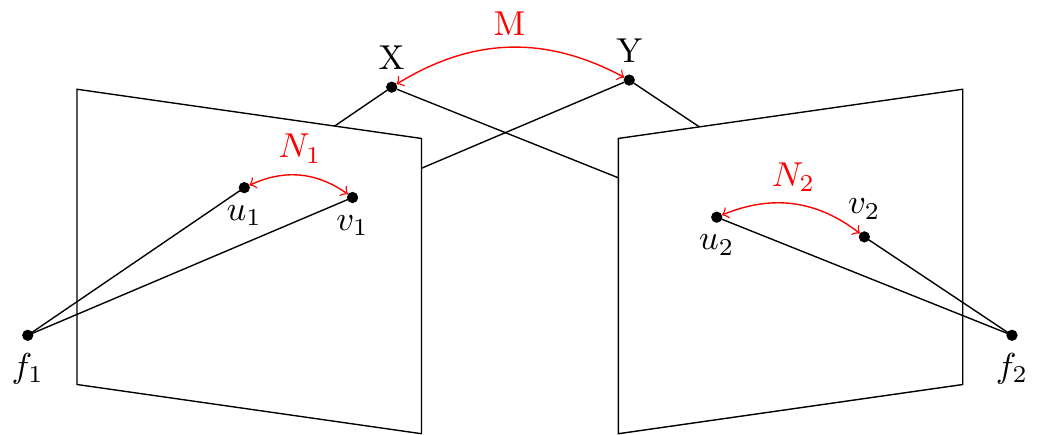}%
   \caption{Unlabeled two-view geometry}
  \label{fig:unlabeledview}
\end{figure} 
 For a symmetric $4\times 4$ rank two matrix $M$ that is suitably generic, $\xi_{A}$ maps the matrix $M$ to $n$ symmetric $3\times 3$ of rank two
matrices. Thus the unlabeled analog of the perspective relation $AX=\lambda u$  for a pinhole camera $A\in \RR^{3\times 4}$ reads as 
\begin{equation}\label{eq:unPerspective}
A(\underbrace{XY^T+YX^T}_{=M})A^T=\lambda (\underbrace{uv^T+vu^T}_{=N}),\, \lambda \in\RR\setminus \{0\}
\end{equation}
The unlabeled multiview map of two points is
\begin{equation}\label{eq:unlabelMapTwo}
\begin{array}{llllll}
\xi_{A}: & \Sym_2(\PP^3) & \hookrightarrow & \Sym_2(\RR^4) & \dashrightarrow & 
\bigl( \Sym_2(\RR^4)\bigr)^n, \\
&  & &  M & \mapsto & (A_1MA_1^T, \ldots ,A_nMA_n^T)  .
\end{array}
\end{equation}
If $m=2$ the unlabeled multiview variety $\Sym_2(V_A)$  is the closure of the image of the rational map $\xi_A$.

We will pay special attention to the algebraic variety $V(\xi_A)$ of the closure of the image of $\xi_A$
in Section \ref{2andChow}.

\section {Relabeling the Unlabeled}
The Chow variety is a tool to
easier handle unlabeled points,  in order
to understand the geometry it is 
sometimes more convenient not to think about unlabeled points as symmetric tensors.
While in the section above we viewed two unlabeled points as one point on the Chow variety $\Sym_m(\PP^2)$,  in this section we will
stick to unlabeled points as points in $\PP^2$ and $\PP^3$. 
Let $\rho$ be the map that takes a labeled point configuration in $(\PP^2)^m$ to its
unlabeled configuration represented as a symmetric tensor, e.g. if $n=2$, $m=2$ then  $\rho$ takes $(u,v)$ and $(v,u)$, both in $\PP^2\times\PP^2$, to the same symmetric matrix $uv^T+vu^T$. The preimage of the unlabeled multiview variety under the map $\rho$ is the union of labeled multiview varieties with the labeling of their image points interchanged.

\begin{figure}
\centering
\begin{subfigure}[b]{.5\textwidth}
  \centering
  \includegraphics[width=.98\linewidth]{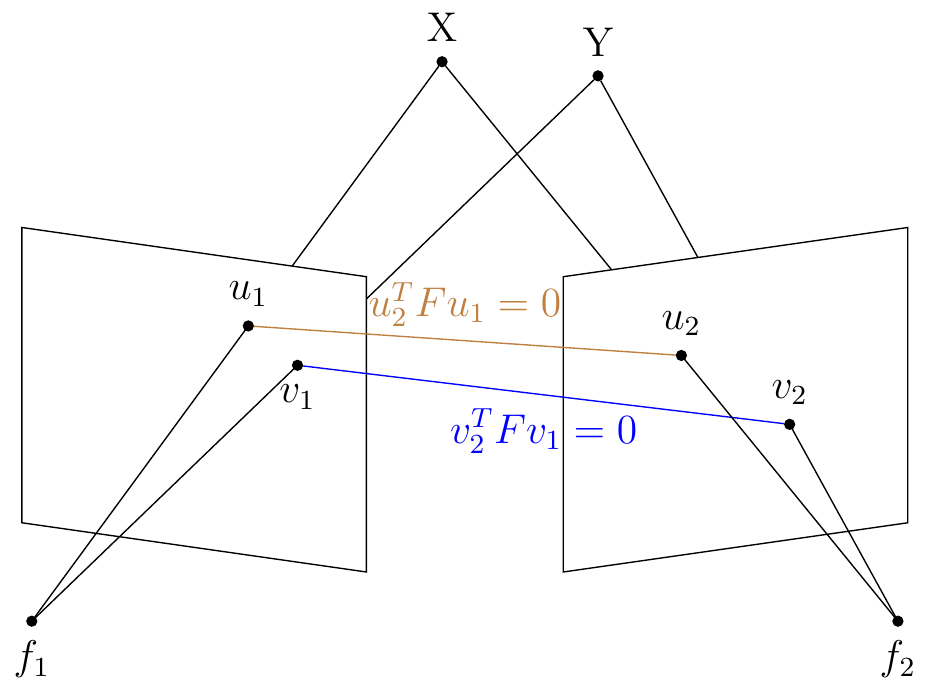}
  \caption{$u_1\thicksim u_2,\,v_1\thicksim v_2$}
  \label{fig:PerumtedLabeled1}
\end{subfigure}%
\begin{subfigure}[b]{.5\textwidth}
  \centering
  \includegraphics[width=.99\linewidth]{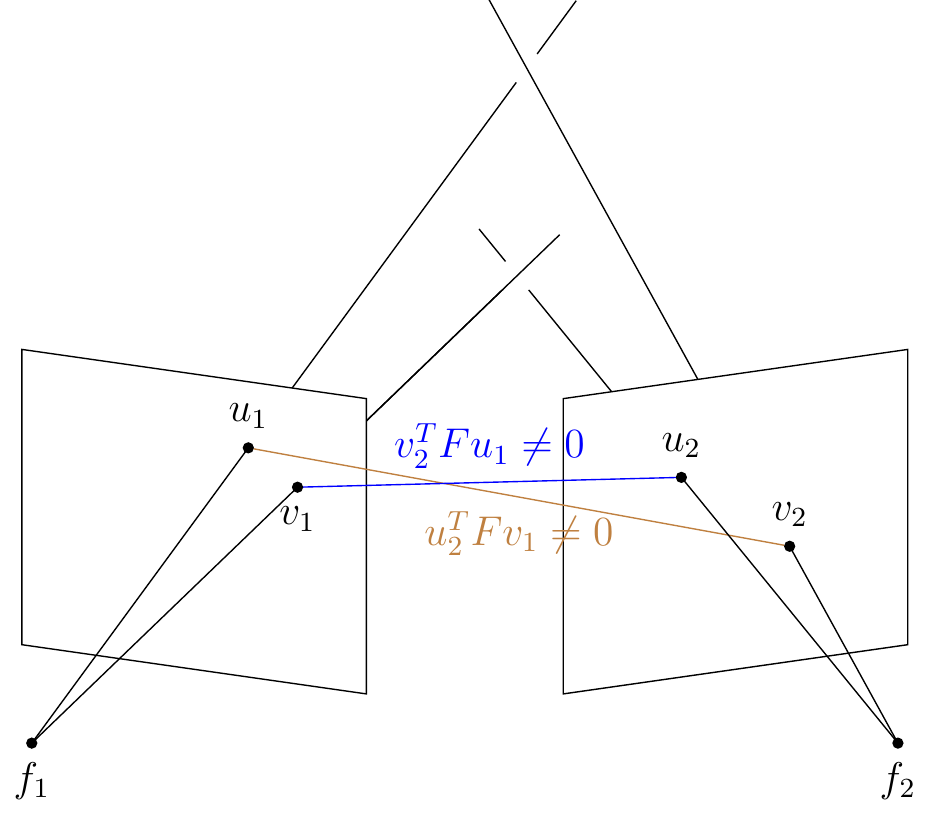}
  \caption{$v_1\thicksim u_2,\,u_1\thicksim v_2$}
  \label{fig:PerumtedLabeled2}
\end{subfigure}
\caption{The two multiview varieties with permuted image point correspondence}
\label{fig:PerumtedLabeled}
\end{figure}
\begin{exmp}\label{labelunlabel} The image points $u_1,v_1,u_2,v_2\in \PP^2$ are on the product of
two two-view varieties if they satisfy
\[u_2^TFu_1=0 \text{ and } v_2^TFv_1=0.\] 
The image points $u_1,v_1,u_2,v_2\in \PP^2$ are on the
 the closure of the preimage under $\rho$ of the unlabeled two-view variety of two points if they satisfy (s. Figure
\ref{fig:PerumtedLabeled})
\begin{equation}\label{eq:twoViewLabelUnlabel} \bigr(u_2^TFu_1=0 \text{ and } v_2^TFv_1=0\bigr)
\text{ or } \bigl(v_2^TFu_1=0 \text{ and }
u_2^TFv_1=0\bigr).\end{equation}
This variety is cut out set-theoretically by the ideal
\[\bigl\<(u_2^TFu_1)(v_2^TFu_1),(u_2^TFu_1)(u_2^TFv_1),(v_2^TFv_1)(v_2^TFu_1),(v_2^TFv_1)(u_2^TFv_1)\bigr\>.\]
\end{exmp}

The approach of Example \ref{labelunlabel} extends to more pictures and more unlabeled
world points. The idea here is to permute the labeling of the image points in each
picture. Each permutation of image points gives a new product of multiview varieties with
permuted coordinates. The union over all these product varieties then is the unlabeled
multiview variety.

We construct a graph with all the image points as vertices, see Figure \ref{fig:permutedImage}.  Let
$G$ be the $n$-partite complete graph on $n\cdot m$ vertices, where each partition of $G$ has
cardinality $m$. Let $K\subset G$ be a perfect $n$-dimensional matching and by $\mathcal K$ we denote
the set of all such perfect $n$-dimensional matchings on $G$. For a fixed $K$ we denote
$P_i(j)$, that is an
image point $u_j^*$ in the $j$-th picture, as the vertex in the $j$-th partition of $G$ in the
$i$-th path $P_i$ of $K$, with $i\in[n],\, j\in[m]$.

In this section we have not been working with unlabeled points in $\Sym_m(\PP^2)$ but with labeled points $(\PP^2)^m$ and their orbits under group action on the image points of the symmetric group. The following theorem illustrates the relation between these two approaches.

\begin{prop}\label{prop:alternative} 
 The closure of the preimage under $\rho$ of the unlabeled multiview variety  is the union of products of multiview varieties, with
permuted image points correspondences
\[ \bigcup_{K\in \mathcal K}\Bigl(V_A\bigl({P_1(1)},{P_2(1)},\ldots,{P_n(1)}\bigr)\times \ldots \times
V_A\bigl({P_1(m)},{P_2(m)},\ldots,{P_n(m)}\bigr)\Bigr).
\]
\end{prop}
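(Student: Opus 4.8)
The plan is to reinterpret the preimage condition as a statement about a finite group action and then exploit that $\rho$ is literally a quotient map. Write $\Gamma:=S(m)^n$ for the group that relabels the $m$ image points independently in each of the $n$ pictures. Since the Chow variety is the symmetric product $\Sym_m(\PP^2)=(\PP^2)^m/S(m)$, applying this picture by picture shows that $\rho\colon\bigl((\PP^2)^m\bigr)^n\to\bigl(\Sym_m(\PP^2)\bigr)^n$ is exactly the geometric quotient by $\Gamma$. Consequently, for any $\Gamma$-invariant subset $W$ one has $\rho^{-1}(\rho(W))=W$, since the fibers of $\rho$ are precisely the $\Gamma$-orbits. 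This is the mechanism that will let me avoid any explicit analysis of the map on special configurations.

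First I would identify the right-hand side as a single $\Gamma$-orbit. The perfect $n$-dimensional matchings $\mathcal K$ are in bijection with $\Gamma/S(m)$, the diagonal $S(m)$ accounting for the freedom to relabel the $m$ tracks; translating the \emph{standard} product $V_A\times\cdots\times V_A$ — the one attached to the identity matching, in which track $j$ uses label $j$ in every picture — by an element of $\Gamma$ produces precisely the product attached to the translated matching. As $\Gamma$ acts transitively on $\mathcal K$, the union $W:=\bigcup_{K\in\mathcal K}\prod_{j=1}^m V_A\bigl(P_1(j),\dots,P_n(j)\bigr)$ is the $\Gamma$-orbit of the standard product. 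In particular $W$ is $\Gamma$-invariant, and it is closed, being a finite union of products of the closed multiview varieties.

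The computational heart is to show that $\rho$ carries the standard product onto all of $\Sym_m(V_A)$. The standard product is a closed subvariety of a product of projective spaces, hence projective, so its image under the morphism $\rho$ is closed. On the other hand, taking labeled images and then forgetting labels picture by picture agrees with first forgetting labels and then applying $\zeta_A$; hence a generic point of the standard product, coming from $m$ distinct world points in general position, maps to a generic point of $\Sym_m(V_A)=\overline{\mathrm{im}\,\zeta_A}$, so the image is dense. A closed dense set is everything, so $\rho(V_A\times\cdots\times V_A)=\Sym_m(V_A)$, and by $\Gamma$-invariance of $\rho$ the other products of $W$ add nothing, giving $\rho(W)=\Sym_m(V_A)$. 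Combining this with the first step, $\rho^{-1}(\Sym_m(V_A))=\rho^{-1}(\rho(W))=W$; in fact this shows the preimage is already closed, so passing to its closure is harmless and the asserted equality follows.

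The steps I expect to be delicate are the two soft inputs to the middle paragraph: checking that $\rho$ is a genuine geometric quotient, so that the Chow variety really is the set-theoretic symmetric product with the $\Gamma$-orbits as fibers, and checking dominance, namely that generic world points yield $m$ distinct and sufficiently general image points in every picture, so that over a dense open of $\Sym_m(V_A)$ the labeled configurations form an honest $\Gamma$-torsor. Once these are in place, all boundary phenomena (colliding world points, epipolar degeneracies) are absorbed automatically by the properness-plus-invariance argument. The alternative route — verifying the inclusion $\rho^{-1}(\Sym_m(V_A))\subseteq W$ directly, point by point — would instead require a hands-on study of these degenerate fibers, which is exactly the work I would prefer to sidestep.
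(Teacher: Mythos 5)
Your proposal is correct, and although its combinatorial core --- identifying picture-by-picture relabelings with the matchings in $\mathcal K$ --- is the same idea that drives the paper's proof, your route is genuinely different in that it is a complete argument, whereas the paper's proof is essentially an informal restatement of the claim (``the unlabeled variety does not know the labeling, hence is the union over all permuted correspondences''). You supply the two ingredients the paper leaves implicit. First, you exploit that $\rho$ is the set-theoretic quotient by $\Gamma=S(m)^n$, with fibers exactly the $\Gamma$-orbits, so that $\rho^{-1}(\rho(W))=W$ for the $\Gamma$-invariant, closed union $W$; this absorbs all degenerate configurations (coinciding points, epipolar coincidences) with no case analysis. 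Second, you prove $\rho(V_A\times\cdots\times V_A)=\Sym_m(V_A)$ by properness plus density; this step genuinely requires proof, because $\Sym_m(V_A)$ is defined as the closure of the image of the rational map $\zeta_A$, and a priori that closure could contain points not of the form $\rho$ of a labeled configuration --- a gap the paper's proof silently steps over. Your approach buys a uniform treatment of all boundary phenomena and the slightly stronger conclusion that the preimage is already closed, so the closure operation in the statement is redundant; the paper's approach buys only brevity. The two delicate points you flag are both fine: the fibers of $\rho$ are exactly the $\Gamma$-orbits because a product of linear forms determines its linear factors up to order (unique factorization), and dominance holds because a generic point of the Chow variety $\Sym_m(\PP^3)$ is the configuration of $m$ distinct general world points, on which $\zeta_A$ agrees with $\rho$ composed with the $m$-fold product of the multiview map $\phi_A$.
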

\begin{proof} The usual multiview variety knows the labeling of the images points, however the
unlabeled does not. Thus the unlabeled variety is nothing but the union of all multiview varieties
with permuted image point correspondences. A permuted image point correspondence is an
$n$-dimensional matching through the $n$ images, as depicted in Figure \ref{fig:permutedImage}.
Taking the union over all such possible correspondences yields the preimage of the the unlabeled multiview variety.
\end{proof}
\begin{figure}[th] \center
  \includegraphics[width=.6\textwidth]{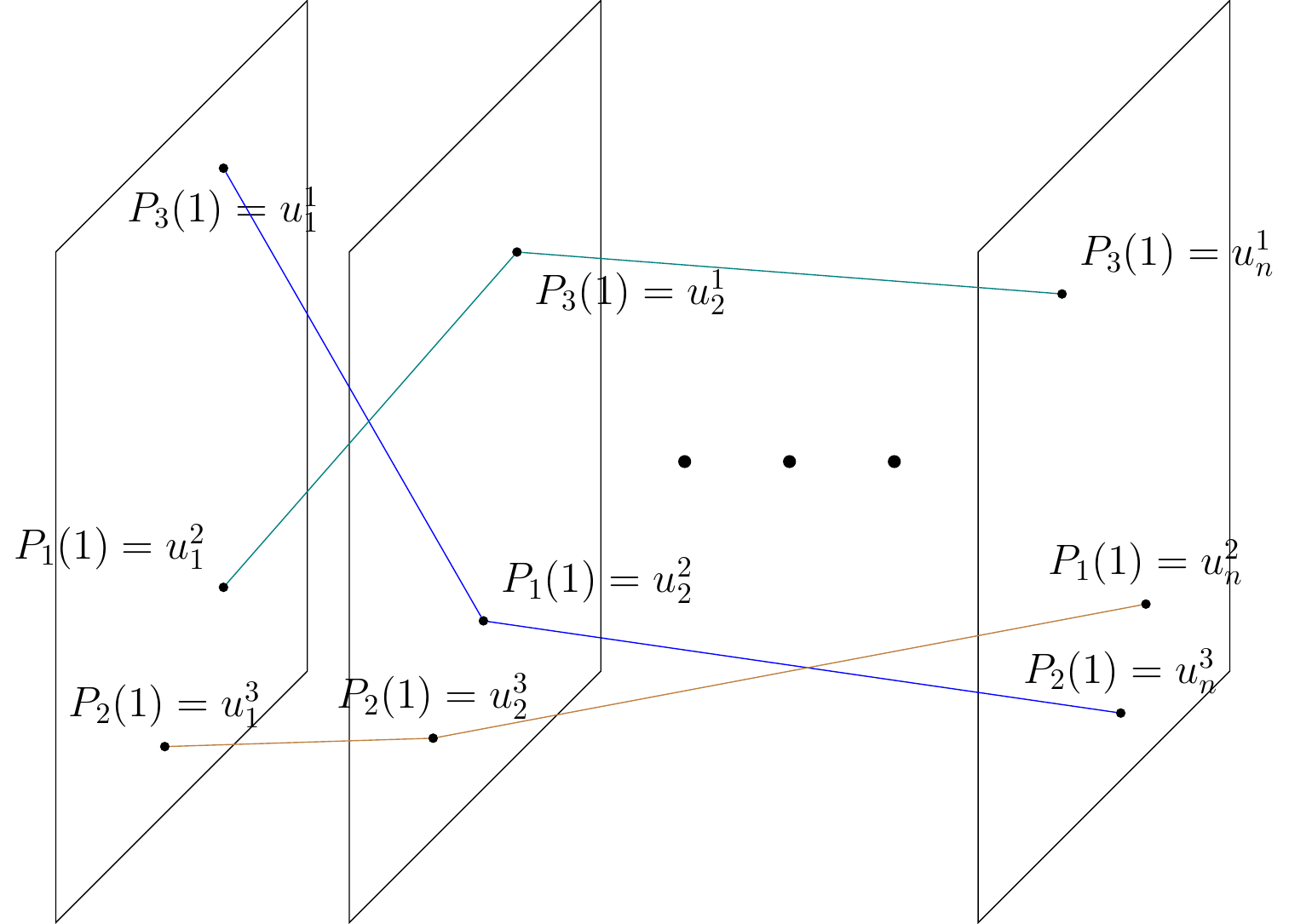}
   \caption{Permuted image point correspondence graph}
  \label{fig:permutedImage}
\end{figure}

\begin{rem}
The multiview variety is cut out set-theoretically by the bilinear constraints, if the
cameras are in linearly general position \cite{heyden1997algebraic}. Thus the unlabeled multiview
variety is cut out by the products of bilinear constraints.
\end{rem}

The usual two-view triangulation does result in either zero solutions ($u_1$, $u_2$ are not on the two-view
variety), one solutions ($u_1$, $u_2$ are on the two-view variety) or a one dimensional linear subspace
 of solutions ($u_1$, $u_2$ are the epipoles). The situation is a little different if we forget the
labeling of the image points. Anyway, the triangulation problem still amounts in intersecting the back-projected
lines. The reconstruction of the original world points is called \emph{unlabeled triangulation}. In the following its ambiguities  are
studied. The unlabeled triangulation is \emph{ambiguous} if there are two or more unlabeled world
point configurations that project down to the same unlabeled image point configuration.
\begin{figure}[b]
\center
  \includegraphics[width=.7\textwidth]{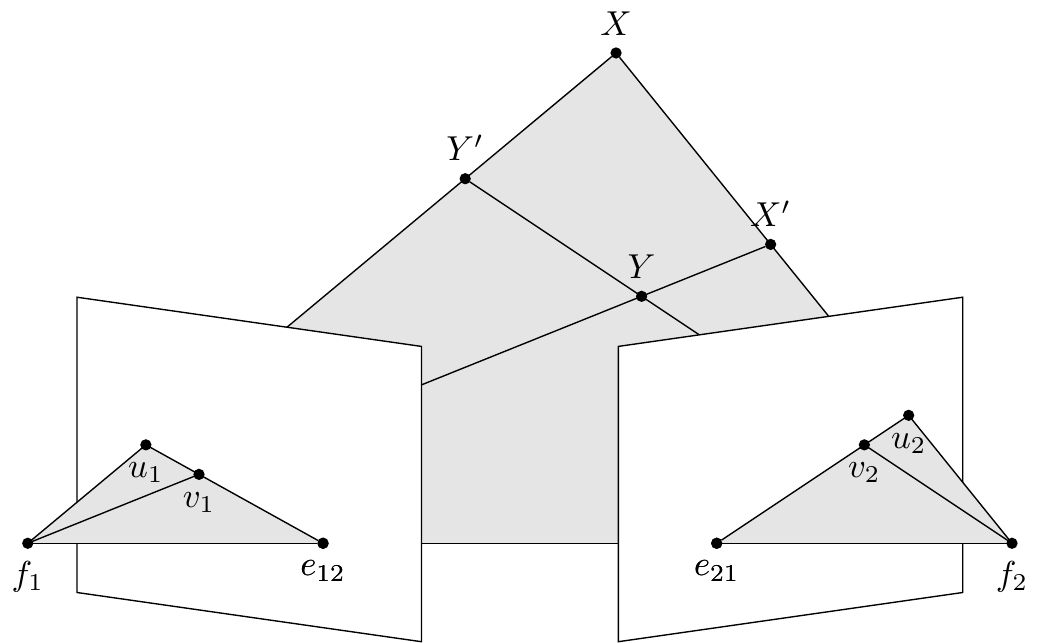}% 
   \caption{Ambiguous unlabeled two-view triangulation}
  \label{fig:twoTriangsUnlabel}
\end{figure}
\begin{prop}\label{prop:twoSolutions} Let n=m=2. The unlabeled triangulation is not unique, if and
only if the image points are on the unlabeled multiview variety and satisfy the two conditions
\[
\begin{array}{cc}|e_{12},u_1,v_1|=0, & |e_{21},u_2,v_2|=0\end{array}.
\] 
If the image points
$u_1,u_2,v_1,v_2$ are distinct from the epipoles $e_1,e_2$, then there are exactly two possible
unlabeled world point pairs $(X,Y)$ and $(X',Y')$ that are possible reconstructions.
\end{prop}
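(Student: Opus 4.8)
The plan is to reduce the statement to the combinatorics of how the two unlabeled image points in one view are paired with the two in the other, and then to translate each pairing into the vanishing of an entry of a single bilinear form built from the fundamental matrix $F$. For $n=m=2$ the unlabeled data are the unordered pairs $\{u_1,v_1\}$ in image~$1$ and $\{u_2,v_2\}$ in image~$2$, and the only freedom in reconstructing is the bijection between them. There are exactly two such bijections: \emph{matching~1} ($u_1\!\leftrightarrow\!u_2$, $v_1\!\leftrightarrow\!v_2$) and \emph{matching~2} ($u_1\!\leftrightarrow\!v_2$, $v_1\!\leftrightarrow\!u_2$). Each matching is an ordinary labeled two-view triangulation, which for image points distinct from the epipoles reconstructs a \emph{unique} world pair (the two back-projected lines meet in a single point). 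Hence the unlabeled triangulation has at most two solutions, one per epipolar-consistent matching, and ``not unique'' means precisely that \emph{both} matchings are epipolar-consistent and yield distinct unlabeled world pairs.

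Next I would package the four relevant epipolar constraints into the single matrix
\[
U_2^{T}FU_1=\begin{pmatrix}u_2^{T}Fu_1 & u_2^{T}Fv_1\\ v_2^{T}Fu_1 & v_2^{T}Fv_1\end{pmatrix},\qquad U_1=[\,u_1\mid v_1\,],\quad U_2=[\,u_2\mid v_2\,].
\]
The diagonal vanishes iff matching~1 is consistent, the anti-diagonal iff matching~2 is; by the two-point, two-view description in Example~\ref{labelunlabel}, lying on the unlabeled multiview variety is exactly ``one of these two pairs vanishes.'' Consequently, both matchings are consistent iff \emph{all four} entries vanish, i.e. iff $U_2^{T}FU_1=0$.

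The heart of the argument is to show that $U_2^{T}FU_1=0$ is equivalent to being on the unlabeled multiview variety \emph{together with} the two determinant conditions. Here I would use $Fe_{12}=0$ and $e_{21}^{T}F=0$, so that the columns $Fu_1,Fv_1$ are the epipolar lines in image~$2$ of $u_1,v_1$, both passing through $e_{21}$. The equation $U_2^{T}FU_1=0$ says that $u_2$ and $v_2$ lie on \emph{both} of these lines; for points distinct from the epipoles this forces $Fu_1$ and $Fv_1$ to be the \emph{same} line, which is the condition $|e_{12},u_1,v_1|=0$, and makes that common line the epipolar line through $u_2,v_2$, which is $|e_{21},u_2,v_2|=0$. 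Conversely, the collinearity $|e_{12},u_1,v_1|=0$ collapses $Fu_1$ and $Fv_1$ to one line, so consistency of one matching immediately drags the other along and yields $U_2^{T}FU_1=0$. I expect this pencil-of-epipolar-lines argument — correctly passing between the vanishing of the $2\times2$ form and the symmetric pair of $3\times3$ determinants, and verifying it in both directions — to be the main obstacle; the degenerate cases ($u_1=e_{12}$, $u_2=v_2$, and so on) must be quarantined by the ``distinct from the epipoles'' hypothesis.

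Finally, to obtain \emph{exactly two} reconstructions I would analyze the ambiguous configuration geometrically: the two collinearities place all four back-projected lines in a single epipolar plane $\Pi$ through the baseline $\beta_{12}$. Inside $\Pi$ the lines form two pencils centered at $f_1$ and $f_2$, and $X,Y$ (from matching~1) together with $X',Y'$ (from matching~2) are the four vertices of the resulting complete quadrilateral, cf.\ Figure~\ref{fig:twoTriangsUnlabel}. A short incidence check shows that if the image points avoid the epipoles then no two of these vertices coincide (for instance $X=X'$ would force the back-projected line of $u_1$ to pass through $f_2$, i.e. $u_1=e_{12}$), so $\{X,Y\}\neq\{X',Y'\}$. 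Since only two matchings exist, this gives exactly two distinct unlabeled world pairs, completing both the equivalence and the count.
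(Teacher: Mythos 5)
Your proof is correct, but it follows a genuinely different route from the paper's. The paper argues synthetically in world space: ambiguity forces the four back-projected lines to acquire at least four intersection points besides the focal points, so by the Veblen--Young axiom the four lines must be coplanar; that coplanarity is then translated into the two collinearity conditions, and the count of exactly two alternative pairs comes from the six pairwise intersections of four coplanar lines, two of which are the focal points $f_1,f_2$. You instead work in image space with the fundamental matrix: you enumerate the two possible matchings, encode their simultaneous consistency as the vanishing of the $2\times 2$ matrix $U_2^T F U_1$, and use $\ker F=\langle e_{12}\rangle$, $\ker F^T=\langle e_{21}\rangle$ to show that all four entries vanish exactly when the configuration lies on the unlabeled two-view variety and satisfies the two collinearities; your complete-quadrilateral argument in the epipolar plane then reproduces the paper's counting step. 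Your approach buys explicitness: the bound of at most two solutions is immediate from the matching count, the equivalence is a transparent linear-algebra verification tied to Example \ref{labelunlabel}, and the role of the epipoles as the kernels of $F$ and $F^T$ is made concrete. The paper's coplanarity argument buys generality: it is the formulation that extends beyond $m=2$ and directly underlies Corollary \ref{cor:twoSolutionsnViews} and the $k!$-solutions remark, where a matching-by-matching analysis would become unwieldy. One shared caveat: just as the paper quietly strengthens its hypothesis mid-proof to pairwise distinctness of $e_{12},u_1,v_1$ and of $e_{21},u_2,v_2$, your coincidence check (e.g.\ ruling out $X=X'$) actually needs $u_1\neq v_1$ and $u_2\neq v_2$ in addition to avoidance of the epipoles; both proofs are implicitly generic in this respect, so this is a fair simplification rather than a gap.
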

\begin{proof} By construction the arrangement of the four back-projected lines of $u_1,u_2,v_1,v_2$ have
the focal points $f_1$ and $f_2$ as intersection points. Such that the triangulation is ambiguous
the four back-projected lines must have at least four additional intersections. Thus the back-projected
lines need to be coplanar by the Veblen-Young axiom.  Because $u_1,u_2,v_1,v_2$ are on the unlabeled multiview variety this is
equivalent to $e_{ij},u_1,v_1$ being collinear, as depicted in Figure \ref{fig:twoTriangsUnlabel}.
Now if $e_{12},u_1,v_1$ are pairwise distinct and $e_{21},u_2,v_2$ are pairwise distinct, then the
back-projected lines have in total six intersections of which four are not the focal points.
\end{proof}

\begin{rem}\label{rem:2coplanar}
Let n=m=2. In world coordinates the ambiguity of the unlabeled triangulation revolves around the world
points $X$, $Y$ and the baseline $\beta_{12}$ through the focal points $f_1,f_2$ being
coplanar. Hence the pencil of planes with the baseline $\beta_{12}$ as its axis encodes the
ambiguities of the two view unlabeled triangulation.
\end{rem}
The ambiguity of the unlabeled triangulation of two views and two unlabeled points extends to the
case $n=2,m\geq 2$.

\begin{cor}\label{cor:twoSolutionsnViews}
If a subset of the world point configuration and the baseline $\beta_{ij}$ are coplanar, then the
unlabeled triangulation is ambiguous.
\end{cor}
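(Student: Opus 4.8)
The plan is to reduce the statement to the two–point case already settled in Proposition \ref{prop:twoSolutions} and Remark \ref{rem:2coplanar}, and then to ``pad'' the construction with the world points that are not involved. Since the corollary only asserts existence of a second reconstruction, it suffices to exhibit one unlabeled world configuration, distinct from the given one, that produces the same unlabeled images in the two views $i$ and $j$. First I would select two world points $X,Y$ lying in the coplanar subset (the subset must contain at least two points for the unlabeled configuration to admit a nontrivial relabeling). By hypothesis $X$, $Y$ and the baseline $\beta_{ij}=\overline{f_if_j}$ span a common plane $\pi$.

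Next I would invoke the two–point analysis verbatim. Because the four back–projected lines $\overline{f_iX}$, $\overline{f_iY}$, $\overline{f_jX}$, $\overline{f_jY}$ all lie in $\pi$, the ``crossed'' intersections $X'=\overline{f_iX}\cap\overline{f_jY}$ and $Y'=\overline{f_iY}\cap\overline{f_jX}$ are well defined, exactly as in Remark \ref{rem:2coplanar}, where the pencil of planes through $\beta_{ij}$ was seen to carry the ambiguity. By Proposition \ref{prop:twoSolutions}, provided the relevant image points are distinct from the epipoles $e_{ij},e_{ji}$, the pair $(X',Y')$ is a genuine second reconstruction of the two unlabeled image points, different from $(X,Y)$.

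Then I would check that the relabeled configuration projects identically. In view $i$ the point $X'$ lies on $\overline{f_iX}$ and hence images to the image of $X$, while $Y'$ lies on $\overline{f_iY}$ and images to the image of $Y$, so view $i$ sees the same unordered pair. In view $j$ the roles are swapped, $X'$ imaging to the image of $Y$ and $Y'$ to the image of $X$, so view $j$ again sees the same unordered pair. The remaining $m-2$ world points I keep fixed; they contribute the same image points to each view in both configurations. Hence the two distinct unlabeled world configurations $\{X,Y\}\cup\{\text{rest}\}$ and $\{X',Y'\}\cup\{\text{rest}\}$ yield the same unlabeled image configuration in the two views, and the unlabeled triangulation is ambiguous.

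The main obstacle I anticipate is not the projection bookkeeping but guaranteeing that the second configuration is genuinely different and nondegenerate: one must exclude the cases where $\overline{f_iX}=\overline{f_jY}$, or where an image point coincides with an epipole, in which the crossed intersection degenerates or $\{X',Y'\}=\{X,Y\}$. These are precisely the genericity hypotheses (distinctness from the epipoles) carried over from Proposition \ref{prop:twoSolutions}, so the argument goes through whenever those hold, and isolating these boundary cases is the only delicate point.
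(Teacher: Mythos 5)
Your proposal is correct and follows the same route as the paper: the paper's proof is literally ``Follows immediately from Proposition \ref{prop:twoSolutions},'' and what you have written is exactly that reduction, spelled out (choosing two points $X,Y$ from the coplanar subset, forming the crossed intersections on the coplanar back-projected lines as in Remark \ref{rem:2coplanar}, and keeping the remaining $m-2$ points fixed). Your explicit attention to the degenerate cases (epipoles, coincident lines) is a reasonable elaboration of the genericity already assumed in Proposition \ref{prop:twoSolutions}.
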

\begin{proof}
Follows immediately from Proposition \ref{prop:twoSolutions}. 
\end{proof}
When a subset of $k$ unlabeled world points and the baseline are on one plane, then all
their back-projected lines intersect each other pairwise. There are in total $k^2$ such
intersection points.  To construct an alternative world point configurations, each
back-projected line has to be used exactly one time. Otherwise the resulting image point
configuration does not align with the original image point configuration.

\begin{rem}
 Consider $k$ world points that are coplanar with  the baseline, then there are $k!$ different
solutions to the unlabeled triangulation from two views.
\end{rem}

For generic points on the unlabeled multiview variety the back-projected lines intersect in only two
points and the reconstructing of the unlabeled world point configuration is unique.

When $m>2$ the back-projected lines represent a line arrangement in $\PP^3$, where through
each focal point $m$ lines pass.  Let the \emph{intersection degree} of a point in $\PP^3$ denote
the number of back-projected lines that intersect in that point.  The unlabeled triangulation amounts
in finding points in $\PP^3$ with intersection degree equal to $n$ that are distinct from the focal
points. A configuration of world points is a valid triangulation of the image points if it covers all back-projected lines.

\begin{figure}[!h]
\center
  \includegraphics[width=.8\textwidth]{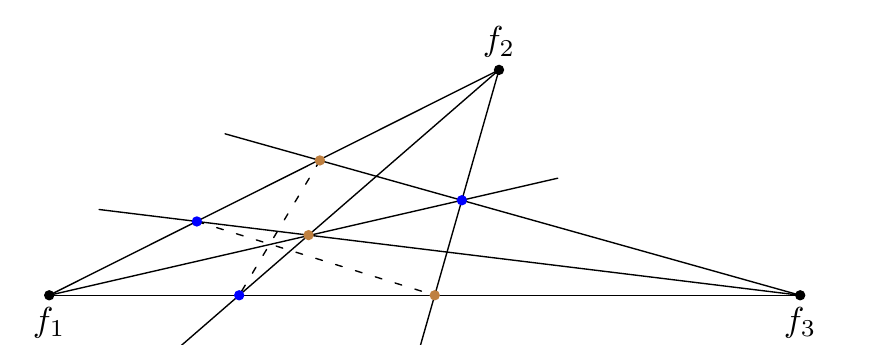}% 
   \caption{The blue and brown vertices describe  different world point configurations that project to the same unlabeled image point configuration. The dashed lines denote the missing incidences in Pappus's hexagon theorem.}
  \label{fig:pappos}
\end{figure}
In general classifying the ambiguities of the unlabeled triangulation is a complex problem.  

For three views $n=3$ and three unlabeled points one can employ
Pappus's hexagon theorem to construct an ambiguous unlabeled triangulation. In this case a  configuration that possesses an ambiguous  unlabeled triangulation must lie on the trifocal plane.  Figure \ref{fig:pappos} depicts such an situation on the trifocal plane.

\section{Two Unlabeled Points}
\label{2andChow}

In this section we will only consider the case of two (m=2) unlabeled world points.
The complicated part of the unlabeled multiview map of Equation \ref{eq:unlabelMapTwo} is the rank
two constraint on the symmetric matrix $M$. However if $M$ has rank two we can be sure that its projections $N_i$ have rank two.

\begin{lem}\label{lem:rankSymMat}
The matrices $N_i$ have rank at most two if $M$ has rank two. If $M=XY^T+YX^T$, then $N$ can be written as $N_i=uv^T+vu^T$ for some $u,v\in\PP^2$.  
\end{lem}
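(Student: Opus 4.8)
The plan is to exploit the algebraic structure of the map $M \mapsto AMA^T$ directly. Write $M = XY^T + YX^T$ with $X, Y \in \RR^4$, and let $A \in \RR^{3\times 4}$ be one of the cameras. Set $u = AX$ and $v = AY$, viewing these as vectors in $\RR^3$ (the image points before projectivization). The first step is the purely mechanical computation
\[
A M A^T \;=\; A(XY^T + YX^T)A^T \;=\; (AX)(AY)^T + (AY)(AX)^T \;=\; uv^T + vu^T.
\]
This identity is immediate from the distributivity of matrix multiplication over addition and the transpose rule $(AX)^T = X^T A^T$, so no real obstacle arises here. It establishes the desired form $N_i = uv^T + vu^T$ with $u = A_iX$ and $v = A_iY$, which matches the perspective relation $A_iX = \lambda_i u_i$, $A_iY = \mu_i v_i$ up to the scalars $\lambda_i, \mu_i$.

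Next I would bound the rank. A matrix of the shape $uv^T + vu^T$ is a sum of two rank-one matrices, hence has rank at most two; this proves the first assertion $\rk(N_i) \le 2$ without even needing the symmetric structure. For completeness one can observe that the column space of $uv^T + vu^T$ is contained in $\spann\{u, v\}$, so the rank is exactly $2$ when $u$ and $v$ are linearly independent and drops to $1$ or $0$ in degenerate cases. The phrase ``rank at most two'' in the statement already accommodates these degeneracies, so this direction is essentially a one-line argument.

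The only genuine subtlety is ensuring that $u$ and $v$ are honest vectors in $\PP^2$, i.e.~that $AX \ne 0$ and $AY \ne 0$, so that the points $u, v$ are well defined in projective space. This is where the genericity hypothesis enters: $AX = 0$ precisely when $X$ lies on the focal point (kernel) of the camera $A$, and for a suitably generic rank-two matrix $M$ neither $X$ nor $Y$ is a focal point of any $A_i$. Since the lemma is phrased for $M$ of rank two arising as $M = XY^T + YX^T$, I would note that the decomposition $M = XY^T + YX^T$ recovers $X, Y$ (up to scaling and the obvious swap) as spanning the column space of $M$, and that the camera being rank three means its kernel is a single point in $\PP^3$, which generic $X, Y$ avoid. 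The main thing to get right, then, is simply the bookkeeping between the affine vectors $AX, AY$ and their projective classes $u_i, v_i$; the rank claim itself is a formal consequence of writing $N_i$ as a sum of two symmetric rank-one pieces.
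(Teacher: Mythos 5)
Your computation for the second statement is exactly the paper's: expand $A_i(XY^T+YX^T)A_i^T=(A_iX)(A_iY)^T+(A_iY)(A_iX)^T$ and identify $u=A_iX$, $v=A_iY$ with the projective image points via the perspective relations. Your attention to well-definedness (that $A_iX\neq 0$ and $A_iY\neq 0$, i.e.\ that $X,Y$ avoid the focal points) is a point the paper glosses over, and you handle it correctly by genericity.

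The one genuine flaw is in how you dispose of the first assertion. That assertion is a hypothesis on $M$ alone: \emph{any} rank-two $M\in\Sym_2(\RR^4)$ has projections $N_i=A_iMA_i^T$ of rank at most two. You derive the bound from the shape $N_i=uv^T+vu^T$, which presupposes $M=XY^T+YX^T$ --- the hypothesis of the \emph{second} assertion. Over $\RR$ these are not the same class of matrices: $XY^T+YX^T$ has signature $(1,1)$ on $\spann\{X,Y\}$, so by Sylvester's law of inertia a positive semidefinite rank-two matrix such as $XX^T+YY^T$ admits no expression of that form, yet the lemma (and its role in the paper, where rank-two points of $V(I'_\theta)$ that are \emph{not} of the special form explicitly arise) needs the rank bound for such matrices too. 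The paper settles this with the one-line inequality $\rk(N_i)=\rk(A_iMA_i^T)\le\rk(M)=2$. Your argument is repaired just as quickly: any rank-two symmetric $M$ is a sum of two rank-one terms (spectral decomposition $\lambda_1q_1q_1^T+\lambda_2q_2q_2^T$), and applying $A_i(\cdot)A_i^T$ termwise again yields a sum of two rank-one matrices. So the gap is one of scope rather than of idea, but as written your proof of the first claim covers only the special case.
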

\begin{proof}
If $M$ has rank two, then  $\rk(N_i)=\rk(A_iMA_i^T)\leq 2$. The second statement is obtained from the fact \[A_i(XY^T+YX^T)A_i^T=A_iX(YA_i)^T+A_iY(XA_i)^T=\lambda_iuv^T+vu^T=\lambda N_i.\]
\end{proof}
\begin{rem}
The reverse of the second statement of Lemma \ref{lem:rankSymMat} does not hold $N_i=uv^T+vu^T\, \forall \, i\nRightarrow M=XY^T+YX^T$
\end{rem}

We first analyze  $\Sym_2(V_A)$ by dropping the rank two constraints in $\xi_A$. This 
gives us the rational map
\[
\begin{array}{llll}\label{eq:dropedConstraint} \theta_{A}: & \Sym_2(\RR^4) & \dashrightarrow &
\bigl(\Sym_2(\RR^3)\bigr)^n, \\ & M & \mapsto & (A_1MA_1^T, \ldots ,A_nMA_n^T) .
\end{array}
\] 
We will denote the ideal and the variety of the closure of the image of as $I_\theta$ and
$V_\theta$. This variety is a relaxation of the unlabeled multiview variety $\Sym_2(V_A)$.
The map $\theta_A$ is linear in the entries of the symmetric matrix $M$, thus by
\emph{row-wise vectorizing} the upper triangular entries of $M\in\Sym_2(\RR^4)$ to
$\vv{M}\in\RR^{10}$ the map $\theta_{A}$ can be rewritten in its standard form of linear
equations, with the coefficient matrices $\widetilde{A}_i$.  In the coordinates of
$A_i=(a_{jk})_{j\in[3],k\in[4]}$ the coefficient matrix $\widetilde{A}_i$ then reads as
\[
  \widetilde{A}_i=\footnotesize\arraycolsep=7pt\def\arraystretch{0.7}\begin{pmatrix}
  {a}_{11}^{2}&
  {a}_{11} {a}_{21}&
  {a}_{11} {a}_{31}&
  {a}_{21}^{2}&
  {a}_{21} {a}_{31}&
  {a}_{31}^{2}\\
  2 {a}_{11} {a}_{12}&
  {a}_{12} {a}_{21}+{a}_{11} {a}_{22}&
  {a}_{12} {a}_{31}+{a}_{11} {a}_{32}&
  2 {a}_{21} {a}_{22}&
  {a}_{22} {a}_{31}+{a}_{21} {a}_{32}&
  2 {a}_{31} {a}_{32}\\
  2 {a}_{11} {a}_{13}&
  {a}_{13} {a}_{21}+{a}_{11} {a}_{23}&
  {a}_{13} {a}_{31}+{a}_{11} {a}_{33}&
  2 {a}_{21} {a}_{23}&
  {a}_{23} {a}_{31}+{a}_{21} {a}_{33}&
  2 {a}_{31} {a}_{33}\\
  2 {a}_{11} {a}_{14}&
  {a}_{14} {a}_{21}+{a}_{11} {a}_{24}&
  {a}_{14} {a}_{31}+{a}_{11} {a}_{34}&
  2 {a}_{21} {a}_{24}&
  {a}_{24} {a}_{31}+{a}_{21} {a}_{34}&
  2 {a}_{31} {a}_{34}\\
  {a}_{12}^{2}&
  {a}_{12} {a}_{22}&
  {a}_{12} {a}_{32}&
  {a}_{22}^{2}&
  {a}_{22} {a}_{32}&
  {a}_{32}^{2}\\
  2 {a}_{12} {a}_{13}&
  {a}_{13} {a}_{22}+{a}_{12} {a}_{23}&
  {a}_{13} {a}_{32}+{a}_{12} {a}_{33}&
  2 {a}_{22} {a}_{23}&
  {a}_{23} {a}_{32}+{a}_{22} {a}_{33}&
  2 {a}_{32} {a}_{33}\\
  2 {a}_{12} {a}_{14}&
  {a}_{14} {a}_{22}+{a}_{12} {a}_{24}&
  {a}_{14} {a}_{32}+{a}_{12} {a}_{34}&
  2 {a}_{22} {a}_{24}&
  {a}_{24} {a}_{32}+{a}_{22} {a}_{34}&
  2 {a}_{32} {a}_{34}\\
  {a}_{13}^{2}&
  {a}_{13} {a}_{23}&
  {a}_{13} {a}_{33}&
  {a}_{23}^{2}&
  {a}_{23} {a}_{33}&
  {a}_{33}^{2}\\
  2 {a}_{13} {a}_{14}&
  {a}_{14} {a}_{23}+{a}_{13} {a}_{24}&
  {a}_{14} {a}_{33}+{a}_{13} {a}_{34}&
  2 {a}_{23} {a}_{24}&
  {a}_{24} {a}_{33}+{a}_{23} {a}_{34}&
  2 {a}_{33} {a}_{34}\\
  {a}_{14}^{2}&
  {a}_{14} {a}_{24}&
  {a}_{14} {a}_{34}&
  {a}_{24}^{2}&
  {a}_{24} {a}_{34}&
  {a}_{34}^{2}\\
\end{pmatrix}
\]
and the according rational map is
\begin{equation}
\begin{array}{llll}
\widetilde{\theta}_{A}: & \PP^{9} & \dashrightarrow & (\PP^5)^n, \\
 &  \vv{M} & \mapsto & (\widetilde{A}_1\vv M, \ldots ,\widetilde{A}_n\vv M) ,
\end{array}
\end{equation}
This map strongly resembles the conventional multiview map of Equation \ref{eq:multiMap} and can be
understood as a higher dimensional analogue of it. We use the techniques developed in
\cite{li2013images} for a more general setup to describe the closure of the image of
$\widetilde{\theta}_A$ in Corollary \ref{cor:relaxedideal}. But we first  study the unlabeled triangulation. The results we obtain to understand the unlabeled triangulation will come in handy for Corollary \ref{cor:relaxedideal}.

Let $\sigma \subseteq [n]$ with $|\sigma|=k$ and
$\widetilde{A}_\sigma=(\widetilde{A}_{\sigma_1}^T,\ldots,\widetilde{A}_{\sigma_k}^T)$, then define
$B_\sigma$ as
\begin{equation}\label{equ:Bmatrix}B_\sigma=
\begin{bmatrix}
\widetilde{A}_{\sigma_1}&\vv N_{\sigma_1}&&\\
\vdots              &                   &\ddots&\\
\widetilde{A}_{\sigma_k}&                   &      &\vv N_{\sigma_k}
\end{bmatrix}.
\end{equation}

Let the \emph{unlabeled focal point} $f_{ij}$ of the cameras $A_i$ and $A_j$ be  $f_{ij}:=f_if_j^T+f_jf_i^T$.
\begin{prop}\label{prop:rankAtildeTwo} Let $n=m=2$. Let $N_1$ and $N_2$ be two generic points on the
unlabeled two-view variety and denote their unlabeled triangulation $M_\Delta=XY^T+YX^T$. Then $\rk(B_{\{1,2\}})=10$ and there exists $\lambda_1, \, \lambda_2 \in \RR$, such that
\[\spann\bigl((\vv{f_{12}},0,0)^T, (\vv M_\Delta,\lambda_1,\lambda_2)^T\bigr)=\ker(B_{\{1,2\}}).\]

\end{prop}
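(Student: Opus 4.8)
The plan is to read $\ker(B_{\{1,2\}})$ as the solution space of the unlabeled two-view triangulation and to identify it explicitly. A vector $(\vv M,\mu_1,\mu_2)^T$ lies in $\ker(B_{\{1,2\}})$ precisely when $\widetilde A_i\vv M=-\mu_i\vv N_i$ for $i=1,2$, i.e. when $M\in\Sym_2(\RR^4)$ satisfies $A_iMA_i^T=-\mu_i N_i$. First I would check that the two proposed generators lie in this kernel. For $(\vv{f_{12}},0,0)^T$ this follows from $A_if_i=0$: since $f_{12}=f_1f_2^T+f_2f_1^T$, we get $A_1f_{12}A_1^T=(A_1f_1)(A_1f_2)^T+(A_1f_2)(A_1f_1)^T=0$, and symmetrically $A_2f_{12}A_2^T=0$. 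For $M_\Delta=XY^T+YX^T$ one has $A_iM_\Delta A_i^T=(A_iX)(A_iY)^T+(A_iY)(A_iX)^T=c_i\,N_i$ with $c_i\neq0$ for generic data (as $X,Y$ avoid the focal points), so putting $\lambda_i=-c_i$ produces a kernel vector with $\lambda_1,\lambda_2\neq0$. Since one generator has vanishing $\mu$-part and the other does not, they are linearly independent, giving $\rk(B_{\{1,2\}})\le10$.

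The substance of the proposition is the reverse inequality, equivalently that the kernel is \emph{exactly} two-dimensional. I would take an arbitrary $(\vv M,\mu_1,\mu_2)^T\in\ker(B_{\{1,2\}})$ and show it lies in the span of the two generators. Using $\lambda_1\neq0$, I subtract $\tfrac{\mu_1}{\lambda_1}(\vv{M_\Delta},\lambda_1,\lambda_2)^T$ to reduce to a kernel vector $(\vv{M'},0,\mu_2')^T$ whose first multiplier vanishes. Then $A_1M'A_1^T=0$, which says that the symmetric form $M'$ vanishes on the row space of $A_1$; this is the hyperplane $f_1^\perp$, so $M'$ must have the shape $M'=f_1b^T+bf_1^T$ for some $b\in\RR^4$ (these four-dimensionally many matrices are exactly the symmetric forms killed by $f_1^\perp$).

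The hard part will be the last step, where genericity and the geometry of Proposition \ref{prop:twoSolutions} enter. Substituting $M'=f_1b^T+bf_1^T$ into the remaining equation gives
\[
A_2M'A_2^T=e_{21}w^T+we_{21}^T=-\mu_2'N_2,\qquad e_{21}=A_2f_1,\; w=A_2b.
\]
If $\mu_2'\neq0$ this equates two nonzero symmetric rank-$\le2$ forms $e_{21}w^T+we_{21}^T$ and $-\mu_2'(u_2v_2^T+v_2u_2^T)$; comparing column spaces forces $\spann(e_{21},w)=\spann(u_2,v_2)$, whence $e_{21},u_2,v_2$ are collinear, i.e. $|e_{21},u_2,v_2|=0$. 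This is exactly the ambiguity condition of Proposition \ref{prop:twoSolutions}, which fails for a generic point $N_2$ of the unlabeled two-view variety; hence $\mu_2'=0$. With $\mu_2'=0$ the equation reads $e_{21}w^T+we_{21}^T=0$, which (since $e_{21}\neq0$) forces $w=A_2b=0$, so $b\in\ker A_2=\langle f_2\rangle$ and therefore $M'=\beta\,f_{12}$. Thus the reduced vector is a multiple of $(\vv{f_{12}},0,0)^T$, and the original vector lies in the claimed span.

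Assembling these steps yields $\ker(B_{\{1,2\}})=\spann\bigl((\vv{f_{12}},0,0)^T,(\vv{M_\Delta},\lambda_1,\lambda_2)^T\bigr)$, which is two-dimensional, so, as $B_{\{1,2\}}$ has twelve columns, $\rk(B_{\{1,2\}})=10$. The only two places where genericity is essential are the guarantee $\lambda_1\neq0$ (legitimizing the first reduction) and the exclusion of the collinearity $|e_{21},u_2,v_2|=0$; each is a codimension-one condition avoided by generic $N_1,N_2$, and I expect the collinearity exclusion — the bridge to Proposition \ref{prop:twoSolutions} — to be the one delicate point that must be argued carefully rather than by routine calculation.
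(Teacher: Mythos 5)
Your proof is correct, and it is in fact strictly stronger than the paper's own argument, so the comparison is worth spelling out. The paper's proof consists only of what you do in your first paragraph: it verifies that $(\vv{f_{12}},0,0)^T$ and $(\vv{M_\Delta},\lambda_1,\lambda_2)^T$ lie in $\ker(B_{\{1,2\}})$, leaning on the phrase ``the triangulation is unique'' for genericity, and stops there. That only establishes the containment of the span in the kernel, i.e.\ $\rk(B_{\{1,2\}})\leq 10$; the equality of the span with the kernel (equivalently, the rank-$10$ claim) is never argued in the paper. You supply exactly this missing half, and you do it by an explicit linear-algebra route: subtract a suitable multiple of $(\vv{M_\Delta},\lambda_1,\lambda_2)^T$ to kill the first multiplier (legitimate since $\lambda_1\neq 0$ generically), identify the symmetric forms annihilated by the row space of $A_1$ as the $4$-dimensional family $\{f_1b^T+bf_1^T : b\in\RR^4\}$ (a clean dimension count, since both spaces are the kernel of the restriction map to forms on $f_1^\perp$), and then show that a kernel vector outside the claimed span would force the collinearity $|e_{21},u_2,v_2|=0$, which is (half of) the ambiguity condition of Proposition \ref{prop:twoSolutions} and fails for generic points on the unlabeled two-view variety. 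What your route buys is a complete proof of $\rk(B_{\{1,2\}})=10$ together with a precise accounting of where genericity is used ($\lambda_1\neq0$, $\rk N_2=2$, and the non-collinearity); what the paper's shorter argument buys is brevity, at the cost of proving only the lower bound on the kernel. One small point to make airtight in your case $\mu_2'\neq 0$: note explicitly that $e_{21}$ and $w$ must be linearly independent there, because a rank-one matrix $e_{21}w^T+we_{21}^T$ cannot equal the rank-two matrix $-\mu_2'N_2$; this justifies reading off the column space as $\spann(e_{21},w)$ before comparing it with $\spann(u_2,v_2)$.
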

\begin{proof} Since $N_1$ and $N_2$ are generic points on the unlabeled multiview variety the
triangulation is unique and $M_\Delta=XY^T+YX^T$ is its solution. Hence there are $\lambda_1, \, \lambda_2$,
such that $(\vv M_\Delta,\lambda_1,\lambda_2)$ is in $\ker(B_{\{1,2\}})$.  On the other hand $\widetilde{A}_{i}\vv M=\lambda_i\vv N_i$
is equivalent to $A_i M A_i^T=\lambda_i N_i$, thus inserting $f_1f_2^T+f_2f_1^T$ gives $A_1
\left(f_1f_2^T+f_2f_1^T\right) A_1^T=0$ and inserting gives $A_2\left(f_2f_1^T+f_1f_2^T\right)
A_2^T=0$. Hence for $\lambda_1=\lambda_2=0$ the vector $(\vv f_{12},0,0)$ is in $\ker(B_{\{1,2\}})$.
\end{proof}

\begin{rem}\label{rem:kernelA} By the proof of Proposition \ref{prop:rankAtildeTwo} $\ker(\widetilde{A}_{\{i,j\}})$ is
spanned by $\vv f_{ij}$.
\end{rem}

\begin{prop}\label{prop:rankAtildeThree} Let $n\geq 3$. For a generic choice of cameras
$A_1,\ldots,A_n$ the matrix $\widetilde{A}_\sigma$ has full rank.
\end{prop}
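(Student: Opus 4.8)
The plan is to show that for $|\sigma|\ge 3$ the linear map $\vv M\mapsto(\widetilde A_{\sigma_1}\vv M,\dots,\widetilde A_{\sigma_k}\vv M)$ is injective; since the source $\Sym_2(\RR^4)\cong\RR^{10}$ has dimension $10$ while there are $k\ge 3$ target blocks of dimension $6$, this injectivity is precisely the full-rank condition in the proposition. First I would reduce to the case $k=3$. For any three indices $\{i,j,l\}\subseteq\sigma$ the defining equations of $\ker\widetilde A_{\{i,j,l\}}$ form a subset of those of $\ker\widetilde A_\sigma$, so $\ker\widetilde A_\sigma\subseteq\ker\widetilde A_{\{i,j,l\}}$; hence it suffices to exhibit three cameras for which $\widetilde A_{\{i,j,l\}}$ already has trivial kernel, and adjoining the remaining blocks of $\sigma$ can only shrink the kernel further.

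Next I would translate the kernel condition back into matrix form: a vector $\vv M\in\ker\widetilde A_{\{i,j,l\}}$ corresponds to a symmetric $M\in\Sym_2(\RR^4)$ satisfying $A_pMA_p^T=0$ for every $p\in\{i,j,l\}$. By Remark \ref{rem:kernelA} the two equations coming from $A_i$ and $A_j$ alone already force $M$ to be a scalar multiple of the unlabeled focal point, $M=c\,f_{ij}=c\,(f_if_j^T+f_jf_i^T)$. Substituting this into the third equation and using $A_lf_i=e_{li}$ and $A_lf_j=e_{lj}$ gives
\[
A_l f_{ij} A_l^T \;=\; (A_lf_i)(A_lf_j)^T+(A_lf_j)(A_lf_i)^T \;=\; e_{li}e_{lj}^T+e_{lj}e_{li}^T,
\]
so the only surviving constraint on the free parameter is $c\,(e_{li}e_{lj}^T+e_{lj}e_{li}^T)=0$.

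It then remains to see that this forces $c=0$. Because the focal points are in general position, in particular $f_i\ne f_l$ and $f_j\ne f_l$, so the epipoles $e_{li}=A_lf_i$ and $e_{lj}=A_lf_j$ are nonzero; and for nonzero vectors $u,v$ the symmetric matrix $uv^T+vu^T$ never vanishes (acting on $u$ forces $v\in\spann(u)$, and then $uv^T+vu^T=2t\,uu^T$ with $t\ne0$). Hence $e_{li}e_{lj}^T+e_{lj}e_{li}^T\ne 0$, which yields $c=0$ and $M=0$, so $\widetilde A_{\{i,j,l\}}$ has trivial kernel. Finally, full rank is a Zariski-open condition on the cameras — the nonvanishing of some $10\times 10$ minor, a polynomial in the entries of the $A_p$ — and the computation above shows it holds on the (dense, open) locus where the hypotheses of Remark \ref{rem:kernelA} apply and the epipoles are nonzero. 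Being a nonempty open condition, it therefore holds for a generic choice of $A_1,\dots,A_n$.

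The main obstacle is the input from Remark \ref{rem:kernelA}, namely that the pair of equations $A_iMA_i^T=A_jMA_j^T=0$ cuts the kernel down to the single line $\spann(\vv{f_{ij}})$ and nothing larger; this is the one genuinely nontrivial rank computation, and it is exactly what was established in Proposition \ref{prop:rankAtildeTwo}. Once that one-dimensional kernel is in hand, eliminating its last degree of freedom with the third camera is the one-line nonvanishing check above, and the passage from ``holds at a point'' to ``holds generically'' is the routine semicontinuity argument.
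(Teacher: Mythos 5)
Your proof is correct and takes essentially the same route as the paper: both arguments rest on Remark \ref{rem:kernelA} (i.e., on Proposition \ref{prop:rankAtildeTwo}), which identifies $\ker(\widetilde{A}_{\{i,j\}})$ with $\spann(\vv{f_{ij}})$, and then eliminate this single remaining direction using a third camera. The one place you go beyond the paper is the final step: where the paper simply asserts that $f_{ij}$, not depending on the other cameras, is generically not in $\ker(\widetilde{A}_l)$, you compute $A_l f_{ij} A_l^T = e_{li}e_{lj}^T + e_{lj}e_{li}^T$ and check that this symmetric matrix is nonzero whenever the focal points are pairwise distinct --- which both supplies the nonemptiness that the paper's genericity claim implicitly requires and shows the conclusion holds on the entire locus of cameras with distinct focal points (wherever Remark \ref{rem:kernelA} applies), not merely generically.
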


\begin{proof} The unlabeled focal point $f_{ij}$ is the kernel of the submatrix  $\widetilde{A}_{\{i,j\}}$. Since
$f_{ij}$ does not depend on the other cameras it is not in the kernel of any of
the other $\widetilde{A}_k$ for a generic choice of cameras.
\end{proof}
 We can use the matrix $B_\sigma$ to design a triangulation algorithm for the unlabeled multiview variety. We call this algorithm the \emph{unlabeled triangulation algorithm}.
The Proposition \ref{prop:rankAtildeTwo} indicates that the unlabeled triangulation is more
complicated than the usual labeled triangulation. By Propositions \ref{prop:twoSolutions} and Corollary \ref{cor:twoSolutionsnViews} it can have multiple solutions.
Especially the case of two views is more elaborate.  Nonetheless it has a unique solution for generic
points on the unlabeled multiview variety. We need a lemma  before we can state the unlabeled triangulation algorithm.

\begin{lem}\label{lem:twoSubspace}
Consider a two-dimensional linear subspace $V\subset \Sym_2(\RR^4)$ spanned by two symmetric matrices. Suppose that $V$ contains two distinct rank two symmetric matrices $M_1,M_2\in \Sym_2(\RR^4)$, such that $\rk(M_1-M_2)=4$. Then the symmetric matrices $M_1, M_2$ are the only rank two symmetric matrices in $V$. Also the subspace $V$ contains no rank three  symmetric matrix.
\end{lem}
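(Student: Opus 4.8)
The plan is to reduce everything to a single binary quartic, the determinant form $p(s,t) = \det(sM_1 + tM_2)$, and to show it factors as a nonzero constant times $s^2t^2$; every rank statement in the lemma then follows by reading off which members of the pencil are singular. First I would record that $M_1$ and $M_2$ are linearly independent: if $M_2$ were a scalar multiple of $M_1$, then $M_1 - M_2$ would also be a scalar multiple of $M_1$ and hence have rank at most two, contradicting $\rk(M_1 - M_2) = 4$. Thus $V = \spann(M_1, M_2)$, and every element of $V$ has the form $sM_1 + tM_2$ with the coefficients $(s,t)$ uniquely determined.

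The key computation is the columnwise multilinear expansion of $p$. Since the $j$-th column of $sM_1 + tM_2$ is $s$ times the $j$-th column of $M_1$ plus $t$ times the $j$-th column of $M_2$, expanding $\det$ column by column writes $p(s,t)$ as a sum of terms $s^{4-k}t^{k}\det(N)$, where $N$ has $k$ columns taken from $M_2$ and the remaining $4-k$ columns taken from $M_1$. The terms with $k=0$ and $k=4$ are $\det(M_1)=0$ and $\det(M_2)=0$. Each term with $k=1$ contains three columns of $M_1$; as these lie in the two-dimensional column space of $M_1$ they are linearly dependent, so that $\det(N)=0$, and symmetrically every $k=3$ term vanishes because it contains three columns of $M_2$. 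Hence only the $k=2$ terms survive and $p(s,t)=c\,s^2t^2$ for a constant $c$. Evaluating at $(s,t)=(1,-1)$ gives $c=\det(M_1-M_2)$, which is nonzero precisely because $\rk(M_1-M_2)=4$.

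With $p(s,t) = \det(M_1 - M_2)\, s^2 t^2$ and $\det(M_1-M_2)\neq 0$ in hand, both conclusions are immediate. A member $sM_1+tM_2$ is singular if and only if $st=0$, i.e.\ it is a scalar multiple of $M_1$ or of $M_2$; every other member is invertible and thus has rank four. In particular no member can have rank three, since rank three forces a vanishing determinant, yet the only singular members are the rank-two matrices $M_1$ and $M_2$ and their scalar multiples. Likewise the only members of rank two are the scalar multiples of $M_1$ and of $M_2$, so that, matrices being considered up to scale, $M_1$ and $M_2$ are the only rank-two symmetric matrices in $V$.

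I expect the only genuine content to be the vanishing of the $k=1$ and $k=3$ coefficients, which is exactly where the rank-two hypothesis on $M_1$ and $M_2$ enters, while the nonvanishing of $c$ is where $\rk(M_1-M_2)=4$ is used. A cleaner but equivalent route to the double vanishing of $p$ at $[1:0]$ and $[0:1]$ would invoke Jacobi's formula $\tfrac{d}{dt}\det(M_1+tM_2)\big|_{t=0}=\operatorname{tr}\bigl(\adj(M_1)M_2\bigr)$ together with the fact that $\adj(M_1)=0$ for a $4\times4$ matrix of rank two; I would mention this as an alternative, but prefer the columnwise expansion since it yields the full factorization in one step.
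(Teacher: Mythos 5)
Your proof is correct, and it studies the same object as the paper's proof---the degree-four determinant of the pencil spanned by $M_1$ and $M_2$---but establishes the key fact by a genuinely different mechanism. The paper works with the affine parametrization $\det(\alpha M_1+(1-\alpha)M_2)$, notes that $\alpha=0$ and $\alpha=1$ are roots, and pins down their multiplicities spectrally: it interprets the roots as generalized eigenvalues and argues that, because a rank-two $M_i$ has a two-dimensional kernel, each of these roots has geometric and algebraic multiplicity two, which exhausts the degree of the quartic. You instead compute the binary quartic $p(s,t)=\det(sM_1+tM_2)$ outright by columnwise multilinearity: every coefficient except that of $s^2t^2$ is a sum of determinants each having at least three columns drawn from a single rank-two matrix, so $p(s,t)=\det(M_1-M_2)\,s^2t^2$, and both conclusions are read off. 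Your route is more elementary (no generalized eigenvalue theory) and has one concrete advantage: it makes explicit where the hypothesis $\rk(M_1-M_2)=4$ enters, namely in guaranteeing that the quartic is not identically zero. The paper's multiplicity count silently needs this same nondegeneracy---if the quartic vanished identically, knowing that two of its roots are double would say nothing, and pencils such as the one spanned by $\diag(1,1,0,0)$ and $\diag(0,1,1,0)$ (whose difference has rank two, so the hypothesis fails) are everywhere singular and contain rank-three matrices. Your evaluation $p(1,-1)=\det(M_1-M_2)\neq 0$ is exactly the bookkeeping that closes this point. What the paper's spectral phrasing buys in exchange is a direct link to the numerical routine built on the lemma (Algorithm~\ref{algo:unlabledTriangAlgo} recovers $M_\Delta$ from the double roots of this quartic via the quadratic formula); your explicit factorization serves that purpose equally well, since it exhibits the two double roots at once.
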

\begin{proof}
We can choose $M_1$ and $M_2$ as basis of $V$. The rank two matrices in $V$ can be
computed as some roots of  $\det(\alpha M_1+(1-\alpha)M_2 )$. Clearly zero and one are
roots of this univariate polynomial. However the roots of $\det(\alpha M_1+(1-\alpha)M_2
)$ are the generalized eigenvalues to the matrix equation $M_1x=\alpha (M_2-M_1)x, \,
x\in\RR^4$. But since $M_1$ has rank two  the generalized eigenvalue $\alpha=0$  has
double algebraic and geometric multiplicity. Then switching the roles of $M_1$ and $M_2$ shows that $\alpha=1$ also has double algebraic and geometric multiplicity.  
\end{proof}
The unlabeled triangulation of two views results in finding the rank two matrices in linear two-dimensional subspaces. That subspace is constructed from $\ker(B_\sigma)$. We can find these matrices by finding the roots of a univariate quartic polynomial, as their determinant needs to vanish. If we only consider generic
points on the unlabeled multiview variety, this polynomial has exactly two real solutions, the unlabeled focal point $f_{ij}$ and the unlabeled triangulation $M_\Delta$ of the unlabeled image points.

\medskip
\begin{algorithm}[H]\label{algo:unlabledTriangAlgo}
 \KwIn{$N_1,N_2$ generic points on the unlabeled multiview variety;}
 \KwOut{$M_\Delta$ unlabeled triangulation of $N_1,N_2$;}
\Begin{
\begin{enumerate}
\item Compute an element $m$ of $\ker{B_\sigma}$, that is not a multiple of $\vv f_{12}$;
\item The first ten entries of $m$ represent a symmetric $4\times 4$ matrix $M$;
\item Compute the quartic polynomial $\det(\alpha M+(1-\alpha)f_{12} )$; 
\item It factors to $\alpha^2(\alpha-c)^2,\, c\in \RR$;
\item Compute $c$; $M_\Delta:=c M+(1-c)f_{12}$;
\end{enumerate}
}
 \caption{Unlabeled triangulation problem for two views}
\end{algorithm}
\begin{proof}
Since $M$ and $f_{12}$ are symmetric with real entries $\det(\alpha M+(1-\alpha)f_{12} )$ only has real roots. By construction $\alpha=0$ is a root. Because $(N_1,N_2)\in \Sym_2(V_A)$ there is a real value $\alpha=c$ such that  $M_\Delta:=c M+(1-c)f_{12}$. Now by Lemma \ref{lem:twoSubspace} these two roots are the only roots of $\det(\alpha M+(1-\alpha)f_{12} )$.
\end{proof}
 Since the roots of the univariate quartic polynomial $\det(\alpha M+(1-\alpha)f_{12} )$  have double multiplicity for generic points, we can find the roots by using the \emph{quadratic formula}.

If $n\geq 3$, then the procedure is simple and amounts to computing $\ker{B_\sigma}$, because the kernel
is one dimensional for generic data by Proposition \ref{prop:rankAtildeThree}. The complete triangulation algorithm can be stated as follows:

\medskip
\begin{algorithm}[H]\label{algo:unlabledTriangAlgoComplete}
 \KwIn{$N_1,\ldots,N_n$ generic points on the unlabeled multiview variety;}
 \KwOut{$M_\Delta$ unlabeled triangulation of $N_1,\ldots,N_n$;}
\eIf{n=2}{
use Algorithm  \ref{algo:unlabledTriangAlgo};}
{
\begin{enumerate}
\item Compute the generator $m$ of $\ker{B_\sigma}$;
\item The first ten entries of $m$ represent the symmetric $4\times 4$ matrix $M_\Delta$;
\end{enumerate}
}
 \caption{Unlabeled triangulation problem}
\end{algorithm}
\begin{proof}
The correctness of Algorithm \ref{algo:unlabledTriangAlgoComplete} follows from the correctness of Algorithm \ref{algo:unlabledTriangAlgo}.
\end{proof}
In some cases one might not want to keep on working with symmetric matrices, but
actually reconstruct the original world points. This amounts to find a decomposition of
$M^\Delta$ into $XY^T+YX^T$. Finding such a decomposition is equivalent to splitting a
degenerate quadrics $\PP^3$ into two planes. To do this, the approach studied  in
\cite{richter2011perspectives} for quadrics in $\PP^2$ generalizes to our case, if one replaces the
adjoint matrix $\adj(M^\Delta)$ by the $6\times 6$ matrix of signed 2-minors of
$M^\Delta$.

We go back to studying the relaxed ideal $I_{{\theta}}$.
 With the help of Proposition \ref{prop:rankAtildeTwo} we are able to determine the
generators of the prime ideal $I_{{\theta}}$ of the image of $\theta_{A}$.

\begin{cor}\label{cor:relaxedideal} 
Let $\sigma\subseteq [n]$. Then the
ideal $I_{{\theta}}$ of the image of $\theta_{A}$ is generated by the maximal minors of $B_\sigma$
(for all $3\leq|\sigma|\leq 10$), and the 11-minors of $B_\sigma$ (for all $|\sigma|=2$).
\end{cor}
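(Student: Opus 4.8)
The plan is to recognize $\widetilde\theta_A:\PP^9\dashrightarrow(\PP^5)^n$ as a higher-dimensional multiview map of exactly the type treated in \cite{li2013images}, and to deduce the corollary by transporting the description of the image ideal given there to our matrices $\widetilde A_i$. The map sends $\vv M\in\RR^{10}=\Sym_2(\RR^4)$ to the $n$ vectors $\widetilde A_i\vv M\in\RR^6=\Sym_2(\RR^3)$, and the matrix $B_\sigma$ from \eqref{equ:Bmatrix} is precisely the incidence matrix whose kernel records a preimage $\vv M$ together with the scalars $\lambda_i$. So I would first phrase the goal in that language: a point $(\vv N_1,\dots,\vv N_n)$ lies in the closure of the image if and only if $B_{[n]}$ admits a kernel vector whose first ten coordinates encode a nonzero symmetric matrix $\vv M$, and the task is to show that this locus is cut out, as a scheme, by the minors listed.

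First I would check that all the listed minors vanish on the image. If $(\vv N_i)$ is in the image there is $\vv M$ with $\widetilde A_i\vv M=\lambda_i\vv N_i$, so $(\vv M,-\lambda_{\sigma_1},\dots,-\lambda_{\sigma_k})\in\ker B_\sigma$ and hence $\rk B_\sigma\le 9+|\sigma|$. For $3\le|\sigma|\le 10$ the matrix $B_\sigma$ is $6|\sigma|\times(10+|\sigma|)$ and, by Proposition \ref{prop:rankAtildeThree}, its left block $\widetilde A_\sigma$ already has full rank $10$; the maximal minors have size $10+|\sigma|>9+|\sigma|$, so they vanish on the image while being nonzero off it, and therefore cut it out set-theoretically among subsets of this size. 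The case $|\sigma|=2$ is the genuinely different one: by Remark \ref{rem:kernelA} the unlabeled focal point $\vv{f_{ij}}$ lies in $\ker\widetilde A_{\{i,j\}}$, so $(\vv{f_{ij}},0,0)$ is \emph{always} in $\ker B_{\{i,j\}}$ and the $12\times12$ determinant is identically zero; on the image Proposition \ref{prop:rankAtildeTwo} gives a second kernel vector and thus $\rk B_{\{i,j\}}\le 10$, which is exactly the condition cut out by the $11$-minors. This explains why the two families appear with different minor sizes, the degeneracy at $|\sigma|=2$ being forced by the symmetric-square structure of $\widetilde A_i$.

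For the converse inclusion, and more importantly for the statement that these minors generate the full prime ideal rather than merely its radical, I would invoke the main theorem of \cite{li2013images}: for the corresponding generic map the multigraded vanishing ideal of the image is generated by the maximal minors of the matrices $B_\sigma$, with the subset size ranging up to $\dim_\RR\Sym_2(\RR^4)=10$, which accounts both for the upper bound $|\sigma|\le 10$ and for the redundancy of all larger minors. Primality then follows since the variety is the closure of the image of the irreducible $\PP^9$.

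The hard part will be that $\widetilde A_i$ is \emph{not} a generic $6\times 10$ matrix: it is the second symmetric power of a $3\times4$ camera, hence constrained to a Veronese-type subvariety of all linear maps, whereas the genericity hypotheses in \cite{li2013images} are a priori stated for generic linear systems. The real work is therefore to certify that every rank condition that theorem requires still holds for generic cameras despite this special structure. Propositions \ref{prop:rankAtildeTwo} and \ref{prop:rankAtildeThree} together with Remark \ref{rem:kernelA} supply exactly the kernel and full-rank statements for $|\sigma|=2$ and $|\sigma|\ge 3$; I would need to verify in addition that the intermediate rank behaviour of the $B_\sigma$, and any Cohen--Macaulay or genericity hypothesis underlying the generation statement, is preserved under the symmetric-square map, so that the conclusion of \cite{li2013images} applies verbatim to our $\widetilde A_i$.
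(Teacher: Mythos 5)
Your proposal follows essentially the same route as the paper's proof: the paper likewise invokes \cite[\S 2.2]{li2013images}, which gives generation of $I_\theta$ by the $(\rk(\widetilde{A}_\sigma)+|\sigma|)$-minors of $B_\sigma$, and then feeds in exactly the rank computations you name --- rank nine for $|\sigma|=2$ (via the proof of Proposition \ref{prop:rankAtildeTwo} and Remark \ref{rem:kernelA}) and full rank ten for $3\leq|\sigma|\leq 10$ (Proposition \ref{prop:rankAtildeThree}) --- together with the redundancy of the minors for $|\sigma|\geq 11$. The ``hard part'' you defer in your last paragraph is in fact already discharged: the cited result is stated rank-sensitively for arbitrary (not merely generic) linear projections, so the two rank propositions are the only structure-specific input required, exactly as the paper uses them.
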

\begin{proof} By \cite[\S 2.2]{li2013images} the ideal is generated by the $\rk(\widetilde{A}_\sigma)+|\sigma|$
minors of $B_\sigma$.  If $|\sigma|=2$ then by the proof of Proposition \ref{prop:rankAtildeTwo}
the matrix 
$\widetilde{A}_\sigma$ has rank nine.  If $3\leq|\sigma|\leq 10$ then $\widetilde{A}_\sigma$ has full rank by
Proposition \ref{prop:rankAtildeThree}. Further maximal minors of matrices $B_\sigma$ with
$|\sigma|\geq 11$ are monomial multiples of the other maximal minors. We obtain a generating set by
just computing the minors of correct size.
\end{proof}

\begin{rem}
Let n=2. Consider the  ideal generated by the 11-minors and  the ideal generated by the three degree $(1, 1)$ polynomials of Example \ref{exmp:chow}. These two ideals are equal.
\end{rem}

Clearly the relaxed ideal $I_{\theta}$ is not a good relaxation of $\Sym_2(V_A)$, but it can easily be tightened by enforcing the singularity condition $\det(N_i)=0$. We define the ideal 
\[
I'_\theta:=I_{\theta}+\langle det(N_1),\ldots,det(N_n)\rangle.
\]
Still it is possible to construct rank two matrices on the variety $V(I'_\theta)$ that are not on $\Sym_2(V_A)$.
Let be $N_1=e_{12}v^T+ve_{12}^T, v\in \PP^2$. Then $\rk(B)\leq 10$ independent of $N_2$, thus requiring that the $11\times 11$ minors of $B$ vanish (s. Corollary
\ref{cor:relaxedideal}) does not impose any constraints on $N_2$.  Remember that Equation 
\ref{eq:twoViewLabelUnlabel} gives a description of the preimage of the unlabeled two-view variety of
two points. However
\[ 
\bigl(V(e_{12}^TFu_2)\times V(v_1^TFv_2)\bigr )\cup\bigl(V(e_{12}^TFv_2)\times V(v_1^TFu_2)\bigr)
\] 
gives constraints on $v_1,v_2,u_2$, namely ($v_2^TFu_1=0 \text{ or } v_2^TFv_1=0$). Thus not all
points of
\begin{equation}\label{eq:21isomorph} 
\bigl \{(e_{12}v_1^T+v_1e_{12}^T,N_2):v\in \PP^2,N_2\in {\Sym}_3(\RR)\bigr\}
\end{equation}
 are on the unlabeled multiview variety  $\Sym_2(V_A) $, but they are on $V(I'_\theta)$.

\begin{prop}\label{prop:gens2} 
For n=2, the unlabeled multiview variety is minimally generated by 7
polynomials in 12 variables. It is Cohen-Macaulay and its Betti table is given in Table
\ref{tab:n2Unlabeled}.
\end{prop}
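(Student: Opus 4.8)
The plan is to leverage the explicit description of the ideal already obtained in Example \ref{exmp:chow}. For $n=2$ the ambient space is $\PP^5\times\PP^5$ with bihomogeneous coordinate ring $R=k[N_1^{jk},N_2^{jk}]_{0\le j\le k\le 2}$ in $6+6=12$ variables, and Example \ref{exmp:chow} records that the ideal $I$ of $\Sym_2(V_A)$ is generated by three forms of bidegree $(1,1)$, one of bidegree $(2,1)$, one of bidegree $(1,2)$, and the two determinants $\det(N_1),\det(N_2)$ of bidegrees $(3,0)$ and $(0,3)$. So the count of seven generators in $12$ variables is already in hand; what remains is to show this generating set is minimal, to compute the graded Betti numbers, and to establish the Cohen--Macaulay property.

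First I would pin down the dimension and codimension. The source $\Sym_2(\PP^3)$ is the variety of symmetric $4\times4$ matrices of rank at most two, which has dimension $6$ in $\PP^9$. By Proposition \ref{prop:twoSolutions} the unlabeled triangulation of two generic views is unique, so $\xi_A$ is birational onto its image and $\dim\Sym_2(V_A)=6$ in $\PP^5\times\PP^5$. Passing to the multihomogeneous coordinate ring, this gives $\dim(R/I)=6+2=8$, hence $\codim(I)=12-8=4$.

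Next I would compute a minimal free resolution of $R/I$ over $R$ in \macaulay{} for a random choice of cameras $A_1,A_2$ with focal points in general position, and read off the graded Betti numbers to fill in Table \ref{tab:n2Unlabeled}. The first Betti number $\beta_1=7$ confirms that the seven generators are minimal. The crucial observation is that the resolution has length exactly $4$: since then $\projdim(R/I)=4=\codim(I)$, the Auslander--Buchsbaum formula $\projdim(R/I)+\depth(R/I)=\dim R=12$ yields $\depth(R/I)=8=\dim(R/I)$, so $R/I$ is Cohen--Macaulay.

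The main obstacle is justifying that a single computation captures the generic behaviour. Because the graded Betti numbers are upper semicontinuous in the entries of $A_1,A_2$, their generic values are minimal and are attained on a dense open subset of camera configurations; one explicit computation with cameras in general position therefore certifies both the Betti table and, via the length-$4$ resolution, the Cohen--Macaulay property for the generic unlabeled two-view variety. A secondary point to verify is that the ideal of Example \ref{exmp:chow} is the full saturated (indeed prime) ideal of the image rather than a merely set-theoretic description, but this is precisely how it was produced there, as the ideal of the image of $V_A\times V_A$.
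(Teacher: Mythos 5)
Your proposal shares the computational core of the paper's proof (a single exact \macaulay{} computation of the minimal free resolution and its Betti table), and your explicit derivation of Cohen--Macaulayness --- $\projdim(R/I)=4=\codim(I)$ together with Auslander--Buchsbaum --- is a welcome elaboration of a step the paper leaves implicit. But the step where you argue that one computation suffices is genuinely different from the paper's, and as you state it, it has a gap. The paper does not use random cameras plus semicontinuity. It invokes Remark 4.4 of \cite{aholt2011hilbert}: every multiview variety with cameras in linearly general position is carried to the multiview variety of the special cameras of Remark \ref{rem:binomMulti} by linear changes of coordinates in $\PP^3$ and in each image plane $\PP^2$. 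These coordinate changes act by congruence $N_i \mapsto T_i N_i T_i^T$ on the symmetric coordinates, hence induce linear automorphisms of $\PP^5\times\PP^5$ carrying any unlabeled two-view variety with cameras in linearly general position to the computed one. So one exact computation with those special cameras proves the statement for \emph{every} such camera pair, not only for generic ones.

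Your semicontinuity argument cannot deliver this. Upper semicontinuity is a one-sided inequality: the Betti numbers computed at your specific random camera pair bound the \emph{generic} Betti numbers from above; they do not certify equality. Equality holds only if your random pair lies outside the Betti-jump locus, and nothing in the computation verifies this --- the checkable condition ``focal points in linearly general position'' is not known, a priori, to imply non-jumping; that implication is precisely what the paper's universality argument supplies. (There is a further subtlety: semicontinuity compares fibers of a flat family, and it is not automatic that the elimination ideal at special parameter values agrees with the specialization of the generic elimination ideal.) Concretely, your single computation yields rigorously that generically the ideal has at most $7$ minimal generators and $\projdim(R/I)\le 4$; combined with your correct codimension count and the general inequality $\projdim(R/I)\ge\codim(I)$, this does prove the Cohen--Macaulay claim, but the exact Betti table of Table \ref{tab:n2Unlabeled} and the statement ``minimally generated by exactly $7$ polynomials'' remain uncertified. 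Note also that citing Example \ref{exmp:chow} for the generator count is circular, since that example is itself the output of the same computation rather than an independent source. The fix is to replace the random cameras by the special cameras of Remark \ref{rem:binomMulti} and add the isomorphism argument, as the paper does.
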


\begin{proof} 
By Remark 4.4 of \cite{aholt2011hilbert} the toric set-up of Remark \ref{rem:binomMulti}
is universal in the sense that every multiview variety with cameras in linearly general
position is isomorphic to the multiview variety of Remark \ref{rem:binomMulti}. Since
the unlabeled multiview variety with cameras in linearly general position is the union of
multiview varieties with correspondences interchanges, any unlabeled multiview variety is
isomorphic to the unlabeled multiview variety with the camera matrices of Remark
\ref{rem:binomMulti}. For two views we only chose the camera matrices $A_1$ and $A_2$ of
Remark \ref{rem:binomMulti} and then ran the code of Listing \ref{algo2} in \macaulay \cite{M2}
with these cameras as input.
\end{proof} 

\begin{table}[b]
\begin{small}
\begin{verbatim}
                                   0 1  2 3 4
                            total: 1 7 11 8 3
                                0: 1 .  . . .
                                1: . 3  2 . .
                                2: . 4  6 . .
                                3: . .  . 2 .
                                4: . .  3 6 3
\end{verbatim}
\end{small}
\smallskip
\caption{\label{tab:n2Unlabeled} Betti numbers for the unlabeled  multiview ideal with $n=2$.}
\end{table}

\begin{rem}
For cameras in linearly general position we have the freedom of choice of a world
coordinate system in $\PP^3$ and coordinate systems in the images in $\PP^2$ without
changing the unlabeled multiview variety up to isomorphism. In that sense we can freely
choose five cameras with focal points in linearly general position to compute an
isomorphic description of the unlabeled multiview variety. 
\end{rem}

As we see in Example \ref{exmp:chow} the generators of the unlabeled two-view variety of two
points ($n=2$, $m=2$) fall  into three different classes according to their multidegree,  namely the ones
with multidegree $(1,1)$,  the ones with degree type $(2,1)$ and the ones with degree  type $(3,0)$. The ones of degree type $(3,0)$ are the determinants of $N_1$ and $N_2$.
\begin{prop}\label{prop:gensFund2} 
 The ideal of  three degree $(1,1)$
equations of the unlabeled two-view variety is a subset of the ideal generated by the
entries of $N_2FN_1$.
\end{prop}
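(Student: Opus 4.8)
The plan is to reduce the asserted ideal containment to a statement in linear algebra, and then to verify that statement in the simple coordinates afforded by Remark \ref{rem:binomMulti}. Write $N_1,N_2$ as symmetric $3\times3$ matrices of indeterminates. Every entry of $N_2FN_1$ is bilinear of bidegree $(1,1)$ in the entries of $(N_1,N_2)$, and by Example \ref{exmp:chow} the three generators $g_1,g_2,g_3$ in question are also of bidegree $(1,1)$. Since multiplying a bidegree-$(1,1)$ form by any non-constant polynomial raises its degree, the only bidegree-$(1,1)$ elements of the ideal generated by the nine entries of $N_2FN_1$ are their $\RR$-linear combinations. Hence the claim is equivalent to the linear inclusion $\spann_\RR(g_1,g_2,g_3)\subseteq W$, where $W$ is the span of the nine entries of $N_2FN_1$. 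First I would record that $\spann_\RR(g_1,g_2,g_3)$ is exactly the space $U$ of \emph{all} bidegree-$(1,1)$ forms vanishing on $\Sym_2(V_A)$: this follows from Example \ref{exmp:chow} together with the identification (the remark preceding Proposition \ref{prop:gens2}, via Corollary \ref{cor:relaxedideal}) of these three polynomials as a basis of the degree-$(1,1)$ graded piece of the ideal, the other generators having degrees $(2,1),(1,2),(3,0),(0,3)$. In particular $\dim U=3$.

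Next I would pull everything back along the relabeling map $\rho$. Substituting $N_i=u_iv_i^T+v_iu_i^T$ gives
\[
N_2FN_1=(v_2^TFu_1)\,u_2v_1^T+(v_2^TFv_1)\,u_2u_1^T+(u_2^TFu_1)\,v_2v_1^T+(u_2^TFv_1)\,v_2u_1^T,
\]
and one checks directly that this is invariant under the relabelings $u_1\leftrightarrow v_1$ and $u_2\leftrightarrow v_2$, so each entry is genuinely a bidegree-$(1,1)$ form in $(N_1,N_2)$ and $W$ is a subspace of the bidegree-$(1,1)$ forms. On the other side, by Example \ref{labelunlabel} the preimage $\rho^{-1}(\Sym_2(V_A))$ is the union of $V_A\times V_A=V(u_2^TFu_1)\times V(v_2^TFv_1)$ with its relabeled copy, and since $W$ consists of relabeling-invariant forms it suffices to test vanishing on the single sheet $\{u_2^TFu_1=0,\ v_2^TFv_1=0\}$. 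A combination $\sum_{p,q}C_{pq}(N_2FN_1)_{pq}$ therefore lies in $U$ precisely when its two surviving terms on that sheet, namely $(v_2^TFu_1)(u_2^TCv_1)+(u_2^TFv_1)(v_2^TCu_1)$, lie in the ideal $\langle u_2^TFu_1,\ v_2^TFv_1\rangle$. Comparing coefficients of $(u_1)_p(v_1)_q(u_2)_r(v_2)_s$, this becomes the linear condition $F_{sp}C_{rq}+F_{rq}C_{sp}=F_{rp}P_{sq}+F_{sq}Q_{rp}$ on $C$, for auxiliary matrices $P,Q$.

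To finish I would exhibit three matrices $C_1,C_2,C_3$ solving this condition for which the forms $\sum_{p,q}(C_k)_{pq}(N_2FN_1)_{pq}$ are linearly independent; these then lie in $W\cap U$, and since $\dim U=3$ they span $U$, giving $U=W\cap U\subseteq W$ and hence the proposition. The cleanest way to produce the $C_k$ and to check independence is to specialize to the normalized cameras of Remark \ref{rem:binomMulti}, where $F$ and the three generators are explicit, and to solve the resulting linear system (equivalently, to verify the membership directly in \macaulay); by the universality quoted in the proof of Proposition \ref{prop:gens2} the conclusion then holds for every configuration in linearly general position. I expect the solving of this coefficient condition to be the main obstacle: it is the one genuinely computational step, and it is also where the subtlety of the statement lives, since \emph{no single entry} of $N_2FN_1$ vanishes on $\Sym_2(V_A)$ — on each sheet $N_2FN_1$ restricts to a nonzero rank-two matrix — so the argument must pass through the full span $W$ rather than through the entries individually.
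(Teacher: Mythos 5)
Your proposal is correct, but it takes a genuinely different route from the paper. The paper's own proof of Proposition \ref{prop:gensFund2} is one sentence: the containment was checked in \macaulay{} for sufficiently many random camera pairs (at least 1000, per Section \ref{compu2}), and the ``sample in a Zariski dense set of parameters'' practice is invoked to conclude that it holds generically. Your argument replaces this sampling with structure, and every reduction in it is sound: the bidegree bookkeeping correctly shows that the only $(1,1)$ elements of the ideal $\langle N_2FN_1\rangle$ are $\RR$-linear combinations of its nine entries, so the ideal containment collapses to a containment of finite-dimensional spaces $U\subseteq W$; Example \ref{exmp:chow} does identify $U$ as the full $(1,1)$ graded piece, of dimension $3$; the pullback along $\rho$ and restriction to the single sheet $\{u_2^TFu_1=0,\ v_2^TFv_1=0\}$ is legitimate because $\rho$ maps that sheet onto a dense subset of $\Sym_2(V_A)$ and its ideal is the prime $\langle u_2^TFu_1,\,v_2^TFv_1\rangle$ (here you are implicitly using that $F$ has rank two, so each bilinear form is irreducible); and the multidegree argument forcing the cofactors into the shapes $v_2^TPv_1$, $u_2^TQu_1$ gives exactly the coefficient identity $F_{sp}C_{rq}+F_{rq}C_{sp}=F_{rp}P_{sq}+F_{sq}Q_{rp}$, which I have verified. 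What each approach buys: the paper's is short but essentially empirical---finitely many random samples do not by themselves certify a statement for all (or generic) camera pairs, since the parameter locus where an ideal containment holds is only constructible; your route, via the universality of Remark \ref{rem:binomMulti} as used in the proof of Proposition \ref{prop:gens2}, needs only one canonical camera pair and then covers every pair in linearly general position. For that last step you should make one point explicit that the paper's universality remark (which concerns the variety, not $F$) does not directly give: under coordinate changes $u_i\mapsto H_iu_i$ one has $N_i\mapsto H_iN_iH_i^T$ and $F\mapsto H_2^{-T}FH_1^{-1}$, hence $N_2FN_1\mapsto H_2(N_2FN_1)H_1^T$, so the span $W$ and the ideal of $\Sym_2(V_A)$ transform consistently and the containment is coordinate-invariant. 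The remaining cost of your route is that its crux---exhibiting three matrices $C_k$ solving the coefficient condition whose associated forms are independent---is still a deferred computation; but it is a small, explicitly posed linear system over $\RR$ for the canonical cameras, as opposed to the paper's thousand elimination-and-membership runs, and once solved your argument is a complete proof rather than a generic claim supported by sampling.
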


\begin{proof} The statement has been checked for sufficiently many random choices of camera pairs with
\macaulay \cite{M2}, , see Section \ref{compu2} for details.
\end{proof} 

We currently have no way to construct the generators with degree type $(2,1)$. However
we do believe that they relate to the variety of Equation \ref{eq:21isomorph}.

The following statements have been derived with \macaulay \cite{M2}. They are concerned with the prime ideal of the unlabeled multiview variety of three, four and five views.

\begin{prop}\label{prop:gens3} 
For n=3, the unlabeled multiview variety is minimally generated by 60
polynomials in 18 variables and its Betti table is given in Table \ref{tab:n3Unlabeled}.
\end{prop}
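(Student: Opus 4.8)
The plan is to argue exactly as in the proof of Proposition \ref{prop:gens2}: reduce to a canonical choice of cameras by a universality argument, and then certify the numerical claim by a computation in \macaulay \cite{M2}.

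First I would invoke the universality of the toric model. By Remark 4.4 of \cite{aholt2011hilbert}, every multiview variety of three cameras whose focal points are in linearly general position is isomorphic, via a change of coordinates in $\PP^3$ together with independent changes of coordinates in each image $\PP^2$, to the multiview variety of a fixed canonical triple. Concretely I would take three of the four canonical cameras $A_1,A_2,A_3$ of Remark \ref{rem:binomMulti}; their focal points are $e_1,e_2,e_3$, which are distinct and non-collinear, so this triple is in linearly general position.

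Next I would pass from the labeled to the unlabeled picture. By Proposition \ref{prop:alternative} the preimage under $\rho$ of the unlabeled multiview variety is a union of products of labeled multiview varieties with interchanged image-point correspondences. The coordinate changes furnished above act within each image $\PP^2$ and therefore commute with the symmetric-group relabeling $\rho$, which only permutes points inside a single $\PP^2$; hence the isomorphism of labeled multiview varieties descends to an isomorphism of the corresponding unlabeled multiview varieties. Consequently every unlabeled three-view variety with cameras in linearly general position is isomorphic to the one built from $A_1,A_2,A_3$, and it suffices to compute with these cameras.

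Finally I would form the quotient coordinates $N_1,N_2,N_3\in\Sym_2(\RR^3)$ in the $18$ variables $(N_i^{jk})$, compute the ideal of the image of $\xi_A$, and run the resolution code of Listing \ref{algo2} in \macaulay to read off the minimal generators and the minimal free resolution; this is how one obtains the count of $60$ generators and Table \ref{tab:n3Unlabeled}. The main obstacle is computational rather than conceptual: with $18$ variables and $60$ generators the Gröbner basis and minimal free resolution are heavy, so in practice one specializes the camera entries over a finite field (or at random over $\QQ$) to make the computation feasible, and one must then check that the specialization is generic enough that the resulting ideal stays prime and the generating set remains minimal, so that the Betti numbers computed there genuinely coincide with those of the true unlabeled multiview ideal.
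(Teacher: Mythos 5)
Your proposal matches the paper's proof: the paper likewise reduces to the canonical cameras $A_1,A_2,A_3$ of Remark \ref{rem:binomMulti} via the universality argument of Remark 4.4 in \cite{aholt2011hilbert} (spelled out in the proof of Proposition \ref{prop:gens2}, to which the paper's proof of this proposition refers) and then certifies the generator count and Betti table by the \macaulay{} computation of Listing \ref{algo2}. Your additional remarks on descending the isomorphism to the unlabeled setting and on computational genericity only make explicit what the paper leaves implicit.
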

\begin{proof}
Similarly to the proof of Proposition \ref{prop:gens2} we choose special cameras to
compute the unlabeled multiview variety. Here the cameras $A_1,A_2$ and $A_3$ of Remark \ref{rem:binomMulti} were chosen for the computation with \macaulay \cite{M2}.
\end{proof} 

\begin{table}[t]
\begin{small}
\begin{verbatim}
            0  1   2    3    4    5    6    7    8    9  10  11 12 13 
     total: 1 60 468 1580 3071 4765 5715 4741 2808 1257 428 102 15  1 
         0: 1  .   .    .    .    .    .    .    .    .   .   .  .  . 
         1: .  9   6    .    .    .    .    .    .    .   .   .  .  . 
         2: . 20 102  159  145   66   12    .    .    .   .   .  .  . 
         3: . 24 273  932 1242  468   60    .    .    .   .   .  .  . 
         4: .  .  12  123  609 2116 2709 1800  657  123  12   .  .  . 
         5: .  7  75  366 1075 2115 2934 2941 2151 1134 416 102 15  1 
\end{verbatim}
\end{small}
\smallskip
\caption{\label{tab:n3Unlabeled} Betti numbers for the unlabeled  multiview ideal with $n=3$.}
\end{table}

\begin{prop}\label{prop:gens4} For n=4, the unlabeled multiview variety is minimally generated by 215
polynomials up to degree six in 24 variables. For n=5, the unlabeled multiview variety is minimally generated by 620
polynomials up to degree six in 30 variables.
\end{prop}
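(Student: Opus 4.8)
The plan is to follow exactly the strategy used in the proofs of Propositions \ref{prop:gens2} and \ref{prop:gens3}: reduce to a universal choice of cameras and then carry out an explicit symbolic computation in \macaulay. The reduction is the same one that worked for $n=2,3$. By Remark 4.4 of \cite{aholt2011hilbert} together with Remark \ref{rem:binomMulti}, every multiview variety with cameras in linearly general position is isomorphic to the one attached to the distinguished cameras $A_i$ of Remark \ref{rem:binomMulti}. By Proposition \ref{prop:alternative} the unlabeled multiview variety is a union of such labeled multiview varieties with interchanged correspondences, so the isomorphism descends to the unlabeled setting. Hence it suffices to compute the ideal for one convenient camera configuration.

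For $n=4$ I would take precisely the four cameras $A_1,A_2,A_3,A_4$ of Remark \ref{rem:binomMulti}, each obtained by deleting one row of $I_4$; their focal points are in linearly general position. For $n=5$ the remark only supplies four matrices, so I would adjoin a fifth camera whose focal point is generic relative to the other four, as permitted by the freedom of coordinate choice noted after Proposition \ref{prop:gens2}. In each case the ambient space is $(\PP^5)^n$, giving $6n$ coordinates, i.e.\ $24$ variables for $n=4$ and $30$ for $n=5$, matching the claim.

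Concretely, I would set up the multigraded polynomial ring in the $6n$ entries of the symmetric matrices $N_1,\ldots,N_n$, form the unlabeled multiview map $\xi_A$ of Equation \ref{eq:unlabelMapTwo} restricted to these cameras, and compute the prime ideal of the closure of its image by elimination, running the code of Listing \ref{algo2} in \macaulay \cite{M2}. Since the relaxed ideal $I_\theta$ and its generators are already understood via Corollary \ref{cor:relaxedideal}, one can seed the computation with the known minors of $B_\sigma$ together with the determinantal constraints $\det(N_i)=0$, then saturate and extract a minimal generating set degree by degree in the multigrading.

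The main obstacle is purely computational: unlike the cases $n=2,3$, where the full minimal free resolution and Betti table are obtainable, the resolutions for $n=4,5$ are far too large for \macaulay to compute in their entirety. This is precisely why the statement only asserts the number of minimal generators up to degree six, rather than a complete Betti table as in Tables \ref{tab:n2Unlabeled} and \ref{tab:n3Unlabeled}. The practical difficulty is controlling the size of the intermediate Gröbner bases during elimination and saturation; I expect that exploiting the fine multigrading, by splitting the computation into graded pieces and treating each multidegree separately, is essential to make the enumeration of the $215$, respectively $620$, generators feasible.
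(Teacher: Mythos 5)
Your proposal takes essentially the same route as the paper: reduce to the distinguished cameras of Remark \ref{rem:binomMulti} (for $n=5$, the paper concretely adjoins the normalized camera with focal point $(1,0,0,1)$, a specific instance of your ``generic fifth camera'') and then certify the generator counts by the \macaulay{} elimination of Listing \ref{algo2}. The extra computational tactics you suggest (seeding with the minors from Corollary \ref{cor:relaxedideal}, splitting by multidegree) and your explanation for the degree-six truncation are reasonable additions but do not change the argument.
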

\begin{proof}
Similarly to the proof of Proposition \ref{prop:gens2} we choose special cameras to
compute the unlabeled multiview variety. For $n=4$, the cameras of Remark
\ref{rem:binomMulti} were chosen for the computation with \macaulay \cite{M2}. For $n=5$,
we chose the four cameras of Remark \ref{rem:binomMulti} and additionally the normalized
camera with focal point $e_1=(1,0,0,1)$ for the computation with \macaulay \cite{M2}.
\end{proof}

\section{More than Two Unlabeled Points}
\label{sec:moreUnlabeled}

The study of more than two unlabeled points is quite more elaborate compared to the case
of two points as done in Section \ref{2andChow}. In general, equations describing the Chow
variety are unknown. Also when more than two unlabeled points are present we work with
symmetric tensors instead of symmetric matrices. In Section \ref{2andChow} we strongly
relied on tools from matrix algebra, these are not available or far more complicated for
symmetric tensors. We can still extend some results to the case of $m\geq 3$.

One can construct a point $M$ on the Chow variety $\Sym_m(\PP^3)$ from their unlabeled world
configuration $X_1,\ldots,X_m$ by taking the \emph{sum over all permuted tensors products}
\[ 
M=\sum_{\pi\in S(m)}
\bigl(X_{\pi(1)}\otimes\ldots\otimes X_{\pi(m)}\bigr)\in \RR^{4\times\ldots\times 4},
\]
where $S(\cdot)$ denotes the symmetric group.
The tensor $M$ is symmetric and only defined up to scale. Thus it can be embedded in
$\PP^{\binom{m+3}{m}-1} $. Similarly, the unlabeled image point configurations are
represented by symmetric tensors $N_i\in\RR^{3\times\ldots\times 3}$ defined up to scale,
embedded in $\PP^{\binom{m+2}{m}-1}$. The perspective relation of Equation
\ref{eq:unPerspective} for a pinhole camera $A\in\RR^{3\times4}$ generalizes to
\[
M(\underbrace{A,\ldots,A}_{m \text{ times}})=\lambda N.
\]
We can vectorize this equation above as done in the Section \ref{2andChow}. This yields the linear
equation
\[
\widetilde A \vv M=\lambda \vv N
\]
for some coefficient matrix $\widetilde A$, with  $\binom{m+2}{m}$ rows and $\binom{m+3}{m}$ columns.  Let $\sigma \subseteq [n]$ with $|\sigma|=k$ and
$\widetilde{A}_\sigma=(\widetilde{A}_{\sigma_1}^T,\ldots,\widetilde{A}_{\sigma_k}^T)$ then define
(analogously to the two point case of Equation \ref{equ:Bmatrix}) the matrix $B_\sigma$ as

\[B_\sigma=
\begin{bmatrix} \widetilde{A}_{\sigma_1}&\vv N_{\sigma_k}&&\\ \vdots & &\ddots&\\
\widetilde{A}_{\sigma_k}& & &\vv N_{\sigma_k}
\end{bmatrix}.
\]
This matrix has $k\binom{m+2}{m}$ rows and columns $\binom{m+3}{m}+k$.
The \emph{unlabeled triangulation} $M_\Delta$ can be reconstructed from the matrix $B_\sigma$.
Similar to the two point case one can construct the \emph{unlabeled focal point} of $m$ views, this is
\[f_{[m]}:=\sum_{\pi\in S(m)}\bigl(f_{\pi(1)}\otimes\ldots\otimes f_{\pi(m)}\bigr)\in \RR^{3\times\ldots\times 3}.\] For exactly $m$ views the unlabeled focal point concatenated with $m$ zeros $(\vv f_{[m]},0,\ldots,0)$ is in the kernel of $B_{[m]}$. For $m+1$ views the kernel of $B_{[m+1]}$ is one dimensional and $M_\Delta$ can be computed by linear algebra.

It is of interest to characterize the pictures of $m$ unlabeled points
using $n$ cameras and their ambiguities. Further it would be desirable to
know the prime ideal of $\Sym_m(V_A)$ for any $n$ and~$m$. In Propositions
\ref{prop:gens2}, \ref{prop:gens3} and \ref{prop:gens4} generators of degree at most six appear, thus we believe that the
generators of $\Sym_m(V_A)$ can be constructed from the information obtained by two and three
views.

\subsection{Acknowledgment}
We would like to thank Joe Kileel for  his comments and our debates about this topic during his stay in Berlin. Additionally, we would like to thank Michael Joswig for his guidance.

\section{Computations}
\label{compu2}

We performed several random experiments.
Our hardware was a cluster with Intel Xeon X2630v2 Hexa-Cores (2.8 GHz) and 64GB main memory per node.
The software was \macaulay, version 1.9.2 \cite{M2}.
All computations were single-threaded.

Our result in Proposition \ref{prop:gensFund2} was proved by computations with \macaulay
\cite{M2}. Following standard practice in computational algebraic geometry, we carried out the
computation on many samples in a Zariski dense set of parameters, and then conclude that it holds
generically.  Further instead of using special cameras as we did in the proofs of Propositions
\ref{prop:gens2}, \ref{prop:gens3}, \ref{prop:gens4}, we were also able to compute the unlabeled multiview
variety with random cameras as input.

The computations were repeated several times with random input.  It is not surprising that increasing $n$,
the number of cameras, increases the running times considerably.  In particular using the toric
setup of Remark \ref{rem:binomMulti} is much faster than choosing dense camera matrices $A_i$.

For Proposition \ref{prop:gensFund2} we performed at least 1000 computations to verify its
correctness.

In Listing~\ref{algo2} we show \macaulay code which can be employed to establish
Proposition~\ref{prop:gens2}.

Lines 1--4 define the rings in which the computations take place.
Lines 6--14 produce random camera matrices.
However, our experiments suggest that it suffices to check that the focal points of the cameras are in linearly general position.
The unlabeled multiview map $\theta_A$ from Equation \ref{eq:dropedConstraint} is encoded in lines 17--21.
Lines 13--14 are unlabeled perspective relations (Equation \ref{eq:unPerspective}) and line 26 are the determinantal constraints on the matrices.
The actual computation is the elimination in line~28.
The unlabeled multiview variety is defined in lines 30--31.

\begin{lstlisting}[label=algo2, caption=Compute $\Sym_2(V_A)$ for two cameras]
ImageRing=QQ[a00,a01,a02,a11,a12,a22]**QQ[b00,b01,b02,b11,b12,b22]
WorldRing=QQ[m00,m01,m02,m03,m11,m12,m13,m22,m23,m33]
MultipleRing=QQ[l,k]
S=WorldRing**ImageRing** MultipleRing 

--generate random camera matrices
n  = 2;
AList=0;
while (numgens minors(4,transpose matrix AList)=!=binomial(n*3,4)) do( 
   AList={};
   for i from 0 to n-1 do(
      A_i=random(ZZ^3,ZZ^4,Height=>20);
      AList=AList| entries A_i; )
   );

--create matrices corresponding to unlabeled points
M=genericSymmetricMatrix(S,m00,4);
N_0=genericSymmetricMatrix(S,a00,3); 
N_1=genericSymmetricMatrix(S,b00,3) ;

--unlabeled multiview map
I=ideal( 
    A_0*M*transpose A_0-l*N_0,
    A_1*M*transpose A_1-k*N_1,
    l*k-1 ,
    det(N_0),det(N_1))+minors(3,M);

time I = eliminate({m00,m01,m02,m03,m11,m12,m13,m22,m23,m33,l,k},I)

F = map(ImageRing,S);
J = F(I);
\end{lstlisting}

 \bibliographystyle{amsplain} 
\bibliography{example}
\end{document}